\newcolumntype{C}[1]{>{\centering\arraybackslash}m{#1}}
\newcolumntype{R}[1]{>{\raggedleft\arraybackslash}m{#1}}
\newtheorem{assumption}{Assumption}
\newtheorem{theorem}{Theorem}   %[section]
\newtheorem{corollary}[theorem]{Corollary}
\newtheorem{lemma}[theorem]{Lemma}
\newtheorem{definition}{Definition}
\newtheorem{example}{Example}
\crefname{lemma}{lemma}{lemmas}
\Crefname{lemma}{Lemma}{Lemmas}
\crefname{thm}{theorem}{theorems}
\Crefname{thm}{Theorem}{Theorems}
\crefname{section}{Sec.}{sections}
\crefname{figure}{Fig.}{Fig.}
\Crefname{figure}{Figure}{Figures}
\newcolumntype{K}[1]{>{\centering\let\newline\\\arraybackslash\hspace{0pt}}m{#1}}
\newcommand*{\tikzmk}[1]{\tikz[remember picture,overlay,] \node (#1) {};\ignorespaces}
\newcommand{\markline}[1]{\tikz[remember picture,overlay]{\node[yshift=3pt,xshift=#1,fill=blue!50,opacity=.25,fit={(A)($(A)+(0.95\linewidth,.3\baselineskip)$)},rounded corners=4pt] {};}\ignorespaces}
\newcommand{\B}[1]{\mathbf{#1}}
\newcommand{\BB}[1]{\mathbb{#1}}
\newcommand{\ns}{\textit{Neural-Swarm2}}
\newcommand{\fav}{\B{f}_a}
\newcommand{\tauav}{\bm{\tau}_a}
\newcommand{\famax}{f_{a,\mathrm{max}}}
\newcommand{\tauamax}{\tau_{a,\mathrm{max}}}
\newcommand{\favhat}{\hat{\B{f}}_a}
\newcommand{\tauavhat}{\hat{\bm{\tau}}_a}
\newcommand{\fnom}{\bm{\Phi}}
\newcommand{\set}{\mathbf{r}}
\newcommand{\env}{\mathrm{env}}
\newcommand{\sm}{\mathrm{small}}
\newcommand{\la}{\mathrm{large}}
\newcommand{\changed}[1]{#1}
\begin{document}
%
% paper title
% Titles are generally capitalized except for words such as a, an, and, as,
% at, but, by, for, in, nor, of, on, or, the, to and up, which are usually
% not capitalized unless they are the first or last word of the title.
% Linebreaks \\ can be used within to get better formatting as desired.
% Do not put math or special symbols in the title.
%\title{Neural-SwarmH: Heterogeneous Multi-Robot Control and Planning Using Learned Interactions}
\title{Neural-Swarm2: Planning and Control of Heterogeneous Multirotor Swarms using\\ Learned Interactions}
%
%
% author names and IEEE memberships
% note positions of commas and nonbreaking spaces ( ~ ) LaTeX will not break
% a structure at a ~ so this keeps an author's name from being broken across
% two lines.
% use \thanks{} to gain access to the first footnote area
% a separate \thanks must be used for each paragraph as LaTeX2e's \thanks
% was not built to handle multiple paragraphs
%
\author{Guanya~Shi,
         Wolfgang~Hönig,
         Xichen~Shi,
         Yisong~Yue,
         and Soon-Jo~Chung
% \author{Guanya~Shi,~\IEEEmembership{Student Member,~IEEE,}
%          Wolfgang~Hönig,~\IEEEmembership{Member,~IEEE,}
%          Xichen~Shi,~\IEEEmembership{Member,~IEEE,} \\
%          Yisong~Yue,
%          and Soon-Jo~Chung,~\IEEEmembership{Senior Member,~IEEE}
\thanks{The authors are with California Institute of Technology, USA.
\texttt{\{gshi, whoenig, xshi, yyue, sjchung\}@caltech.edu}.}% <-this % stops a space
% <-this % stops a space
\thanks{The video is available at \url{https://youtu.be/Y02juH6BDxo}.}
\thanks{The work is funded in part by Caltech's Center for Autonomous Systems and Technologies (CAST) and the Raytheon Company.}}%
%\thanks{Manuscript received April 19, 2005; revised August 26, 2015.}}

% note the % following the last \IEEEmembership and also \thanks - 
% these prevent an unwanted space from occurring between the last author name
% and the end of the author line. i.e., if you had this:
% 
% \author{....lastname \thanks{...} \thanks{...} }
%                     ^------------^------------^----Do not want these spaces!
%
% a space would be appended to the last name and could cause every name on that
% line to be shifted left slightly. This is one of those "LaTeX things". For
% instance, "\textbf{A} \textbf{B}" will typeset as "A B" not "AB". To get
% "AB" then you have to do: "\textbf{A}\textbf{B}"
% \thanks is no different in this regard, so shield the last } of each \thanks
% that ends a line with a % and do not let a space in before the next \thanks.
% Spaces after \IEEEmembership other than the last one are OK (and needed) as
% you are supposed to have spaces between the names. For what it is worth,
% this is a minor point as most people would not even notice if the said evil
% space somehow managed to creep in.

% The paper headers
\markboth{IEEE Transactions on Robotics}%
{Shi \MakeLowercase{\textit{et al.}}: Neural-Swarm2}
% The only time the second header will appear is for the odd numbered pages
% after the title page when using the twoside option.
% 
% *** Note that you probably will NOT want to include the author's ***
% *** name in the headers of peer review papers.                   ***
% You can use \ifCLASSOPTIONpeerreview for conditional compilation here if
% you desire.

% If you want to put a publisher's ID mark on the page you can do it like
% this:
%\IEEEpubid{0000--0000/00\$00.00~\copyright~2015 IEEE}
% Remember, if you use this you must call \IEEEpubidadjcol in the second
% column for its text to clear the IEEEpubid mark.

% use for special paper notices
%\IEEEspecialpapernotice{(Invited Paper)}

% make the title area
\maketitle

% As a general rule, do not put math, special symbols or citations
% in the abstract or keywords.
\begin{abstract}
We present \ns{}, a learning-based method for motion planning and control that allows heterogeneous multirotors in a swarm to safely fly in close proximity. Such operation for drones is challenging due to complex aerodynamic interaction forces, such as downwash generated by nearby drones and ground effect. Conventional planning and control methods neglect capturing these interaction forces, resulting in sparse swarm configuration during flight. Our approach combines a physics-based nominal dynamics model with learned Deep Neural Networks (DNNs) with strong Lipschitz properties. We make use of two techniques to accurately predict the aerodynamic interactions between heterogeneous multirotors: i) spectral normalization for stability and generalization guarantees of unseen data and ii) heterogeneous deep sets for supporting any number of heterogeneous neighbors in a permutation-invariant manner without reducing expressiveness. The learned residual dynamics benefit both the proposed interaction-aware multi-robot motion planning and the nonlinear tracking control design because the learned interaction forces reduce the modelling errors. Experimental results demonstrate that \ns{} is able to generalize to larger swarms beyond training cases and significantly outperforms a baseline nonlinear tracking controller with up to three times reduction in worst-case tracking errors.
\end{abstract}

%% Note that keywords are not normally used for peerreview papers.
\begin{IEEEkeywords}
Aerial systems, deep learning in robotics, multi-robot systems, multi-robot motion planning and control
\end{IEEEkeywords}

% For peer review papers, you can put extra information on the cover
% page as needed:
% \ifCLASSOPTIONpeerreview
% \begin{center} \bfseries EDICS Category: 3-BBND \end{center}
% \fi
%
% For peerreview papers, this IEEEtran command inserts a page break and
% creates the second title. It will be ignored for other modes.
\IEEEpeerreviewmaketitle

\section{Introduction}

\IEEEPARstart{T}{he} ongoing commoditization of unmanned aerial vehicles (UAVs) requires robots to fly in much closer proximity to each other than before, which necessitates advanced planning and control methods for large aerial swarms~\cite{swarmsurvey,morgan2016swarm}.
For example, consider a search-and-rescue mission where an aerial swarm must enter and search a collapsed building.
In such scenarios, close-proximity flight enables the swarm to navigate the building much faster compared to swarms that must maintain large distances from each other.
Other important applications of close-proximity flight include manipulation, search, surveillance, and mapping.
In many scenarios, heterogeneous teams with robots of different sizes and sensing or manipulation capabilities are beneficial due to their significantly higher adaptability. For example, in a search-and-rescue mission, larger UAVs can be used for manipulation tasks or to transport goods, while smaller ones are more suited for exploration and navigation.

\begin{figure}[!h]
\includegraphics[width=\linewidth]{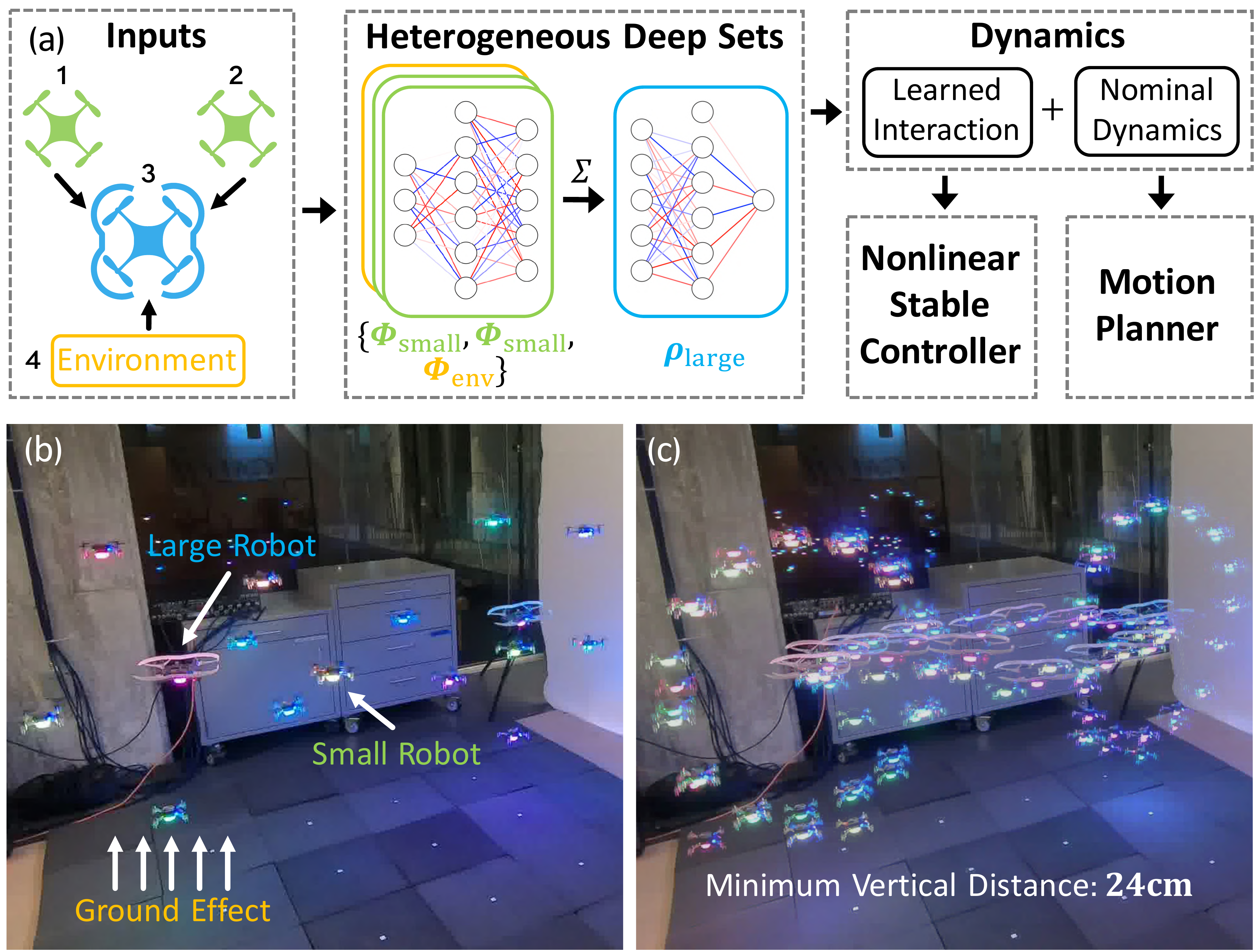}
\caption{We learn complex interaction between multirotors using heterogeneous deep sets and design an interaction-aware nonlinear stable controller and a multi-robot motion planner (a). Our approach enables close-proximity flight (minimum vertical distance \SI{24}{cm}) of heterogeneous aerial teams (16 robots) with significant lower tracking error compared to solutions that do not consider the interaction forces (b,c).}
\label{fig:fig1}
\end{figure}

A major challenge of close-proximity control and planning is that small distances between UAVs create complex aerodynamic interactions.
For instance, one multirotor flying above another causes the so-called downwash effect on the lower one, which is difficult to model using conventional model-based approaches~\cite{twoagent}.
Without accurate downwash interaction modeling, a large safety distance between vehicles is necessary, thereby preventing a compact 3-D formation shape, e.g., \SI{60}{cm} for the small Crazyflie 2.0 quadrotor (\SI{9}{cm} rotor-to-rotor)~\cite{DBLP:journals/trob/HonigPKSA18}. Moreover, the downwash is sometimes avoided by restricting the relative position between robots in the 2-D horizontal plane~\cite{DBLP:conf/icra/DuLVS19}.
For heterogeneous teams, even larger and asymmetric safety distances are required~\cite{DBLP:conf/iros/DebordHA18}.
However, the downwash for two small Crazyflie quadrotors hovering \SI{30}{cm} on top of each other is only \SI{9}{g}, which is well within their thrust capabilities, and suggests that proper modeling of downwash and other interaction effects can lead to more precise motion planning and dense formation control.

In this paper, we present a learning-based approach, \ns, which enhances the precision, safety, and density of close-proximity motion planning and control of heterogeneous multirotor swarms.
In the example shown in \cref{fig:fig1}, we safely operate the same drones with vertical distances less than half of those of prior work~\cite{DBLP:journals/trob/HonigPKSA18}.
In particular, we train deep neural networks (DNNs) to predict the residual interaction forces that are not captured by the nominal models of free-space aerodynamics. To the best of our knowledge, this is the first model for aerodynamic interactions between two or more multirotors in flight.
Our DNN architecture supports heterogeneous inputs in a permutation-invariant manner without reducing the expressiveness.
The DNN only requires relative positions and velocities of neighboring multirotors as inputs, similar to the existing collision-avoidance techniques~\cite{DBLP:conf/isrr/BergGLM09}, which enables fully-decentralized computation.
We use the predicted interaction forces to augment the nominal dynamics and derive novel methods to directly consider them during motion planning and as part of the multirotors' controller.

From a learning perspective, we leverage and extend two state-of-the-art tools to derive effective DNN models.
First, we extend deep sets~\cite{deepsets} to the heterogeneous case and prove its representation power.
Our novel encoding is used to model interactions between heterogeneous vehicle types in an index-free or permutation-invariant manner, enabling better generalization to new formations and a varying number of vehicles.
The second is spectral normalization~\cite{bartlett2017spectrally}, which ensures the DNN is Lipschitz continuous and helps the DNN generalize well on test examples that lie outside the training set.
We demonstrate that the interaction forces can be computationally efficiently and accurately learned such that a small 32-bit microcontroller can predict such forces in real-time.

From a planning and control perspective, we derive novel methods that directly consider the predicted interaction forces.
For motion planning, we use a two-stage approach. In the first stage, we extend an existing kinodynamic sampling-based planner for a single robot to the interaction-aware multi-robot case.
In the second stage, we adopt an optimization-based planner to refine the solutions of the first stage.
Empirically, we demonstrate that our interaction-aware motion planner both avoids dangerous robot configurations that would saturate the multirotors' motors and reduces the tracking error significantly.
For the nonlinear control we leverage the Lipschitz continuity of our learned interaction forces to derive stability guarantees similar to our prior work~\cite{shi2019neural-lander,shi2020neural}.
The controller can be used to reduce the tracking error of arbitrary desired trajectories, including ones that were not planned with an interaction-aware planner.

We validate our approach using two to sixteen quadrotors of two different sizes, and we also integrate ground effect and other unmodeled dynamics into our model, by viewing the physical environment as a special robot.
To our knowledge, our approach is the first that models interactions between two or more multirotor vehicles and demonstrates how to use such a model effectively and efficiently for motion planning and control of aerial teams.

\section{Related Work}
The aerodynamic interaction force applied to a single UAV flying near the ground (ground effect), has been modeled analytically~\cite{cheeseman1955effect, doi:10.2514/6.2015-1685,kumardownwash}.
In many cases, the ground effect is not considered in typical multirotor controllers and thus increases the tracking error of a multirotor when operating close to the ground.
However, it is possible to use ground effect prediction in real-time to reduce the tracking error~\cite{kumardownwash,shi2019neural-lander}.

The interaction between two rotor blades of a single multirotor has been studied in a lab setting to optimize the placement of rotors on the vehicle~\cite{drones2040043}.
However, it remains an open question how this influences the flight of two or more multirotors in close proximity.
Interactions between two multirotors can be estimated using a propeller velocity field model~\cite{twoagent}.
Unfortunately, this method is hard to generalize to the multi-robot or heterogeneous case and it only considers the stationary case, which is inaccurate for real flights.

The use of DNNs to learn higher-order residual dynamics or control actions is gaining attention in the areas of control and reinforcement learning settings \cite{shi2019neural-lander,o2021meta,le2016smooth,taylor2019episodic,cheng2019control,mckinnon2019learn,saveriano2017data,johannink2018residual}.
For swarms, a common encoding approach is to discretize the whole space and employ convolutional neural networks (CNNs), which yields a permutation-invariant encoding. Another common encoding for robot swarms is a Graphic Neural Network (GNN)~\cite{gnn, gnn-uav}. GNNs have been extended to heterogeneous graphs~\cite{heterogeneous-gnn}, but it remains an open research question how such a structure would apply to heterogeneous robot teams.
We extend a different architecture, which is less frequently used in robotics applications, called deep sets~\cite{deepsets}.
Deep sets enable distributed computation without communication requirements.
Compared to CNNs, our approach: i) requires less training data and computation; ii) is not restricted to a pre-determined resolution and input domain; and iii) directly supports the heterogeneous swarm.
Compared to GNNs, we do not require any direct communication between robots.
Deep sets have been used in robotics for homogeneous~\cite{shi2020neural} and heterogeneous~\cite{DBLP:journals/ral/RiviereHYC20} teams.
Compared to the latter~\cite{DBLP:journals/ral/RiviereHYC20}, our heterogeneous deep set extension has a more compact encoding and we prove its representation power.

For motion planning, empirical models have been used to avoid harmful interactions~\cite{morgan2016swarm,morgan2014model,DBLP:journals/trob/HonigPKSA18,DBLP:conf/iros/DebordHA18, DBLP:conf/icra/MellingerKK12}.
Typical safe boundaries along multi-vehicle motions form ellipsoids~\cite{DBLP:journals/trob/HonigPKSA18} or cylinders~\cite{DBLP:conf/iros/DebordHA18} along the motion trajectories.
Estimating such shapes experimentally would potentially lead to many collisions and dangerous flight tests and those collision-free regions are in general conservative.
In contrast, we use deep learning to estimate the interaction forces accurately in heterogeneous multi-robot teams.
This model allows us to directly control the magnitude of the interaction forces to accurately and explicitly control the risk, removing the necessity of conservative collision shapes.

\changed{We generalize and extend the results of our prior work~\cite{shi2020neural} as follows. i) We derive \emph{heterogeneous deep sets} to extend to the heterogeneous case and prove its expressiveness, which also unifies the approach with respect to single-agent residual dynamics learning (e.g., learning the ground effect for improved multirotor landing~\cite{shi2019neural-lander}) by regarding the environment as a special neighbor. ii) We present a novel two-stage method to use the learned interaction forces for multi-robot motion planning. iii) We explicitly compensate for the delay in motor speed commands in our position and attitude controllers, resulting in stronger experimental results for both our baseline and \ns{}. iv) Leveraging i)-iii), this paper presents a result of close-proximity flight with minimum vertical distance \SI{24}{cm} (the prior work~\cite{DBLP:journals/trob/HonigPKSA18} requires at least \SI{60}{cm} as the safe distance) of a 16-robot heterogeneous team in challenging tasks (e.g., the 3-ring task in \cref{fig:fig1}(b,c)). Our prior work~\cite{shi2020neural} only demonstrated 5-robot flights in relatively simple tasks.}

\section{Problem Statement}
\ns{} can generally apply to any robotic system and we will focus on multirotor UAVs in this paper.
We first present single multirotor dynamics including interaction forces modeled as disturbances.
Then, we generalize these dynamics for a swarm of multirotors.
Finally, we formulate our objective as a variant of an optimal control problem and introduce our performance metric.

\subsection{Single Multirotor Dynamics}
A single multirotor's state comprises of the global position $\B{p} \in \mathbb{R}^3$, global velocity $\B{v} \in \mathbb{R}^3$, attitude rotation matrix $\B{R} \in \mathrm{SO}(3)$, and body angular velocity $\bm{\omega} \in \mathbb{R}^3$.
Its dynamics are:
\begin{subequations}
\begin{align}
\dot{\B{p}} &= \B{v}, &  
m\dot{\B{v}} &=m\B{g}+\B{R}\B{f}_u + \fav,\label{eq:pos_dynamics} \\ 
\dot{\B{R}}&=\B{R}\B{S}(\bm{\omega}), & 
\B{J}\dot{\bm{\omega}} &= \B{J} \bm{\omega} \times \bm{\omega}  + \bm{\tau}_u + \tauav,
\label{eq:att_dynamics} \\
\bm{\eta} &= \B{B}_0\B{u}, & \dot{\B{u}} &= -\lambda \B{u} + \lambda \B{u}_c,
\label{eq:delay_model}
\end{align}
\label{eq:dynamics}% <- this is important (stops the space)
\end{subequations}
where $m$ and $\B{J}$ denote the mass and inertia matrix of the system, respectively; $\B{S}(\cdot)$ is a skew-symmetric mapping; $\B{g} = [0; 0; -g]$ is the gravity vector; and $\B{f}_u = [0; 0; T]$ and $\bm{\tau}_u = [\tau_x; \tau_y; \tau_z]$ denote the total thrust and body torques from the rotors, respectively. The output wrench $\bm{\eta}=[T;\tau_x;\tau_y;\tau_z]$ is linearly related to the control input $\bm{\eta}=\B{B}_0\B{u}$, where $\B{u}=[n_1^2;n_2^2;\ldots;n_M^2]$ is the squared motor speeds for a vehicle with $M$ rotors and $\B{B}_0$ is the actuation matrix.
A multirotor is subject to additional disturbance force $\fav=[f_{a,x}; f_{a,y}; f_{a,z}]$ and disturbance torque $\tauav=[\tau_{a,x}; \tau_{a,y}; \tau_{a,z}]$.
We also consider a first order delay model in \cref{eq:delay_model}, where $\B{u}_c$ is the actual command signal we can directly control, and $\lambda$ is the scalar time constant of the delay model. 

Our model creates additional challenges compared to other exisiting multirotor dynamics models (e.g.,~\cite{DBLP:conf/icra/MellingerKK12}). The first challenge stems from the effect of delay in \cref{eq:delay_model}. The second challenge stems from disturbance forces $\fav$ in \cref{eq:pos_dynamics} and disturbance torques $\tauav$ in \cref{eq:att_dynamics}, generated by the interaction between other multirotors and the environment.

\subsection{Heterogeneous Swarm Dynamics}
We now consider $N$ multirotor robots. We use $\B{x}^{(i)}=[\B{p}^{(i)};\B{v}^{(i)};\B{R}^{(i)};\bm{\omega}^{(i)}]$ to denote the state of the $i^{\mathrm{th}}$ multirotor. We use $\B{x}^{(ij)}$ to denote the \emph{relative} state component between robot $i$ and $j$, e.g., $\B{x}^{(ij)}=[\B{p}^{(j)}-\B{p}^{(i)};\B{v}^{(j)}-\B{v}^{(i)};\B{R}^{(i)}{\B{R}^{(j)}}^\top]$.

We use $\mathcal{I}(i)$ to denote the type of the $i^{\mathrm{th}}$ robot, where robots with identical physical parameters such as $m$, $\B{J}$, and $\B{B}_0$ are considered to be of the same type. We assume there are $K\leq N$ types of robots, i.e., $\mathcal{I}(\cdot)$ is a surjective mapping from $\{1,\cdots,N\}$ to $\{\mathrm{type}_1,\cdots,\mathrm{type}_K\}$. 
Let $\set^{(i)}_{\mathrm{type}_k}$ be the set of the relative states of the type$_k$ neighbors of robot $i$: 
\begin{equation}
\set^{(i)}_{\mathrm{type}_k} = \{\B{x}^{(ij)}\, |\, j\in\mathrm{neighbor}(i)\text{ and }\mathcal{I}(j)=\mathrm{type}_k \}.
\label{eq:rel-states}
\end{equation}
\changed{The neighboring function $\mathrm{neighbor}(i)$ is defined by an interaction volume function $\mathcal{V}$. Formally, $j\in\mathrm{neighbor}(i)$ if $\B{p}^{(j)}\in\mathcal{V}(\B{p}^{(i)},\mathcal{I}(i),\mathcal{I}(j))$, i.e., robot $j$ is a neighbor of $i$ if the position of $j$ is within the interaction volume of $i$. In this paper, we design $\mathcal{V}$ as a cuboid based on observed interactions in experiments.} The ordered sequence of all relative states grouped by robot type is
\begin{equation}
\set^{(i)}_{\mathcal{I}} = \left( \set^{(i)}_{\mathrm{type}_1}, \set^{(i)}_{\mathrm{type}_2}, \cdots, \set^{(i)}_{\mathrm{type}_K}\right).
\label{eq:all-rel-states}
\end{equation}

The dynamics of the $i^{\mathrm{th}}$ multirotor can be written in compact form:
\begin{equation}
\dot{\B{x}}^{(i)} = \fnom^{(i)}(\B{x}^{(i)},\B{u}^{(i)}) \\ 
+ \begin{bmatrix}
\B{0} \\ 
\fav^{(i)}(\set^{(i)}_{\mathcal{I}}) \\ 
\B{0} \\ 
\tauav^{(i)}(\set^{(i)}_{\mathcal{I}})
\end{bmatrix},
\label{eq:hetero-onerobot}        
\end{equation}
where $\fnom^{(i)}(\B{x}^{(i)},\B{u}^{(i)})$ denotes the nominal dynamics of robot $i$, and $\fav^{(i)}(\cdot)$ and $\tauav^{(i)}(\cdot)$ are the unmodeled force and torque of the $i^{\mathrm{th}}$ robot that are caused by interactions with neighboring robots or the environment (e.g., ground effect and air drag). 

Robots with the same type have the same nominal dynamics and unmodeled force and torque:
\begin{equation}
\fnom^{(i)}(\cdot)=\fnom^{\mathcal{I}(i)}(\cdot),\fav^{(i)}(\cdot)=\fav^{\mathcal{I}(i)}(\cdot),\tauav^{(i)}(\cdot)=\tauav^{\mathcal{I}(i)}(\cdot)\;\forall i.
\label{eq:hetero-interactions}
\end{equation}
Note that the homogeneous case covered in our prior work~\cite{shi2020neural} is a special case where $K=1$, i.e., $\fnom^{(i)}(\cdot)=\fnom(\cdot)$, $\fav^{(i)}(\cdot)=\fav(\cdot)$, and $\tauav^{(i)}(\cdot)=\tauav(\cdot)\;\forall i$.

Our system is heterogeneous in three ways: i) different robot types have heterogeneous nominal dynamics $\fnom^{\mathcal{I}(i)}$; ii) different robot types have different unmodeled $\fav^{\mathcal{I}(i)}$ and $\tauav^{\mathcal{I}(i)}$; and iii) the neighbors of each robot belong to $K$ different sets. 

We highlight that our heterogeneous model not only captures different types of robot, but also different types of environmental interactions, e.g., ground effect~\cite{shi2019neural-lander} and air drag. This is achieved in a straightforward manner by viewing the physical environment as a special robot type. We illustrate this generalization in the following example.

\begin{example}[small and large robots, and the environment]
\label{example:hetero-system}
We consider a heterogeneous system as depicted in \cref{fig:fig1}(a). Robot $3$ (large robot) has three neighbors: robot 1 (small), robot 2 (small) and environment 4. For robot $3$, we have
\begin{equation*}
\begin{aligned}
\B{f}_{a}^{(3)} &= \fav^{\la}(\set^{(3)}_{\mathcal{I}}) = 
\fav^{\la}(\set^{(3)}_{\sm}, \set^{(3)}_{\la}, \set^{(3)}_{\env}),\\
\set_{\sm}^{(3)} &= \{\B{x}^{(31)}, \B{x}^{(32)}\},\,\,\, \set_{\la}^{(3)} = \emptyset,\,\,\,
\set_{\env}^{(3)} = \{\B{x}^{(34)}\}
\end{aligned}    
\end{equation*}
and a similar expression for $\bm{\tau}_{a}^{(3)}$.
\end{example}

% \subsection{Problem Statement}
\subsection{Interaction-Aware Motion Planning \& Control}
Our goal is to move the heterogeneous team of robots from their start states to goal states, which can be framed as the following optimal control problem:
\begin{align}
&\min_{\B{u}^{(i)},\B{x}^{(i)}, t_f} \sum_{i=1}^N \int_{0}^{t_f} \| \B{u}^{(i)}(t) \| dt \label{eq:motion-planning}
\\
%&\text{subject to:} \\
&\text{\noindent s.t.}\begin{cases}
\text{\noindent robot dynamics } \eqref{eq:hetero-onerobot} &i\in[1,N] \\ %\; \forall t \\
\B{u}^{(i)}(t) \in \mathcal{U}^{\mathcal{I}(i)};\,\, \B{x}^{(i)}(t) \in \mathcal{X}^{\mathcal{I}(i)}  &i\in[1,N] \\ %\forall t \\
\|\B{p}^{(ij)}\| \geq r^{(\mathcal{I}(i) \mathcal{I}(j))} &i < j,\; j\in[2,N] \\ %\; \forall t \\
\|\B{f}_{a}^{(i)}\| \leq \famax^{\mathcal{I}(i)};\,\, \|\bm{\tau}_{a}^{(i)}\| \leq \tauamax^{\mathcal{I}(i)} &i\in[1,N] \\ %\; \forall t \\
\B{x}^{(i)}(0) = \B{x}_s^{(i)};\,\,\B{x}^{(i)}(t_f) = \B{x}_f^{(i)} &i\in[1,N]
\end{cases} \nonumber
\end{align}
where $\mathcal{U}^{(k)}$ is the control space for $\mathrm{type}_k$ robots, $\mathcal{X}^{(k)}$ is the free space for $\mathrm{type}_k$ robots, $r^{(lk)}$ is the minimum safety distance between $\mathrm{type}_l$ and $\mathrm{type}_k$ robots, $\famax^{(k)}$ is the maximum endurable interaction force for $\mathrm{type}_k$ robots, $\tauamax^{(k)}$ is the maximum endurable interaction torque for $\mathrm{type}_k$ robots, $\B{x}_s^{(i)}$ is the start state of robot $i$, and $\B{x}_f^{(i)}$ is the desired state of robot $i$.
In contrast with the existing literature~\cite{DBLP:conf/iros/DebordHA18}, we assume a tight spherical collision model and bound the interaction forces directly, eliminating the need of manually defining virtual collision shapes. For instance, larger $\famax^{\mathcal{I}(i)}$ and $\tauamax^{\mathcal{I}(i)}$ will yield denser and more aggressive trajectories. 
Also note that the time horizon $t_f$ is a decision variable.

Solving \cref{eq:motion-planning} in real-time in a distributed fashion is intractable due to the exponential growth of the decision space with respect to the number of robots.
Thus, we focus on solving two common subproblems instead.
First, we approximately solve \cref{eq:motion-planning} offline as an interaction-aware motion planning problem.
Second, we formulate an interaction-aware controller that minimizes the tracking error online. This controller can use both predefined trajectories and planned trajectories from the interaction-aware motion planner.

Since interaction between robots might only occur for a short time period with respect to the overall flight duration but can cause significant deviation from the nominal trajectory, we consider the worst tracking error of any robot in the team as a success metric:
\begin{equation}
    \max_{i,t} \|\B{p}^{(i)}(t)-\B{p}_d^{(i)}(t)\|,
    \label{eq:metric}
\end{equation}
where $\B{p}_d^{(i)}(t)$ is the desired trajectory for robot $i$. \changed{Note that this metric reflects the worst error out of \emph{all} robots, because different robots in a team have various levels of task difficulty. For example, the two-drone swapping task in \cref{fig:plot4} is very challenging for the bottom drone due to the downwash effect, but relatively easier for the top drone.}
Minimizing \cref{eq:metric} implies improved tracking performance and safety of a multirotor swarm during tight formation flight.

\section{Learning of Swarm Aerodynamic Interaction}
We employ state-of-the-art deep learning methods to capture the unknown (or residual) dynamics caused by interactions of heterogeneous robot teams. In order to use the learned functions effectively for motion planning and control, we require that the DNNs have strong Lipschitz properties (for stability analysis), can generalize well to new test cases, and use compact encodings to achieve high computational and statistical efficiency. To that end, we introduce \emph{heterogeneous deep sets}, a generalization of regular deep sets~\cite{deepsets}, and employ spectral normalization~\cite{bartlett2017spectrally} for strong Lipschitz properties.

In this section, we will first review the homogeneous learning architecture covered in prior work~\cite{deepsets,shi2020neural}. Then we will generalize them to the heterogeneous case with representation guarantees. Finally, we will introduce spectral normalization and our data collection procedures.  

\subsection{Homogeneous Permutation-Invariant Neural Networks}
Recall that in the homogeneous case, all robots are with the same type ($\mathrm{type}_1$). Therefore, the input to functions $\fav$ or $\tauav$ is a single set. The permutation-invariant aspect of $\fav$ or $\tauav$ can be characterized as:
\begin{equation*}
\fav(\set_{\mathrm{type}_1}^{(i)}) = \fav(\pi(\set_{\mathrm{type}_1}^{(i)})),\,\tauav(\set_{\mathrm{type}_1}^{(i)}) = \tauav(\pi(\set_{\mathrm{type}_1}^{(i)}))
\end{equation*}
for any permutation $\pi$.
Since the aim is to learn the function $\fav$ and $\tauav$ using DNNs, we need to guarantee that the learned DNN is permutation-invariant. Therefore,
we consider the following ``deep sets''~\cite{deepsets} architecture to approximate homogeneous $\fav$ and $\tauav$:
\begin{equation}
\begin{bmatrix}
\fav(\set_{\mathrm{type}_1}^{(i)}) \\
\tauav(\set_{\mathrm{type}_1}^{(i)})
\end{bmatrix}
\approx
\bm{\rho} \left(\sum_{\B{x}^{(ij)}\in \set_{\mathrm{type}_1}^{(i)}}\bm{\phi}(\B{x}^{(ij)})\right)\coloneqq
\begin{bmatrix}
\favhat^{(i)} \\
\tauavhat^{(i)}
\end{bmatrix}
,
\label{eq:learningmodel}
\end{equation}
where $\bm{\phi}(\cdot)$ and $\bm{\rho}(\cdot)$ are two DNNs. The output of $\bm{\phi}$ is a hidden state to represent ``contributions'' from each neighbor, and $\bm{\rho}$ is a nonlinear mapping from the summation of these hidden states to the total effect. 

Obviously the network architecture in \cref{eq:learningmodel} is permutation-invariant due to the inner sum operation. We now show that this architecture is able to approximate any continuous permutation-invariant function.
The following Lemma~\ref{lemma:homeomorphism} and \cref{thm:deepsets} are adopted from~\cite{deepsets} and will be used and extended in the next section for the heterogeneous case.

\begin{lemma}\label{lemma:homeomorphism}
Define $\bar{\bm{\phi}}(z)=[1;z;\cdots;z^M]\in\BB{R}^{M+1}$ as a mapping from $\BB{R}$ to $\BB{R}^{M+1}$, and $\mathcal{X}=\{[x_1;\cdots;x_M]\in[0,1]^M | x_1\leq\cdots\leq x_M\}$ as a subset of $[0,1]^M$. For $\B{x}=[x_1;\cdots;x_M]\in\mathcal{X}$, define $\B{q}(\B{x})=\sum_{m=1}^M \bar{\bm{\phi}}(x_m)$. Then $\B{q}(\B{x}):\mathcal{X} \to \BB{R}^{M+1}$ is a homeomorphism. 
\end{lemma}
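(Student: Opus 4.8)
The plan is to verify the three ingredients of a homeomorphism onto the image: continuity of $\B{q}$, injectivity of $\B{q}$, and continuity of the inverse on $\B{q}(\mathcal{X})$, the last of which will come for free from compactness.

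Continuity is immediate: the components of $\B{q}(\B{x})$ are the power sums $p_0 = M$, $p_1 = \sum_m x_m$, $\ldots$, $p_M = \sum_m x_m^M$, which are polynomials in the coordinates of $\B{x}$, hence continuous on $\mathcal{X}$.

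The crux is injectivity. The zeroth coordinate is the constant $M$ and carries no information, so it suffices to show that $(p_1,\ldots,p_M)$ determines the tuple $(x_1,\ldots,x_M)$. Here I would invoke Newton's identities, which over a field of characteristic zero give a triangular, invertible relation between the power sums $p_1,\ldots,p_M$ and the elementary symmetric polynomials $e_1,\ldots,e_M$ of $x_1,\ldots,x_M$; thus the $e_k$ are recovered uniquely from the $p_k$. Since $e_1,\ldots,e_M$ are, up to sign, the coefficients of the monic polynomial $\prod_{m=1}^M (t-x_m) = t^M - e_1 t^{M-1} + \cdots + (-1)^M e_M$, the multiset $\{x_1,\ldots,x_M\}$ of its roots is uniquely determined, and the ordering constraint $x_1\le\cdots\le x_M$ built into $\mathcal{X}$ then pins down the ordered tuple. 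Hence $\B{q}\colon\mathcal{X}\to\B{q}(\mathcal{X})$ is a continuous bijection.

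Finally, $\mathcal{X} = [0,1]^M \cap \{x_1\le\cdots\le x_M\}$ is closed and bounded in $\mathbb{R}^M$, hence compact, while $\mathbb{R}^{M+1}$ is Hausdorff; a continuous bijection from a compact space to a Hausdorff space is a homeomorphism onto its image, which finishes the proof. The only delicate point is the injectivity argument — in particular that Newton's identities are invertible here, which relies on working over $\mathbb{R}$ (so that the integers $1,\ldots,M$ appearing in the recursion are units); everything else is routine point-set topology.
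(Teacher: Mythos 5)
Your proof is correct and follows essentially the same route as the paper, which simply cites the Newton--Girard formulae (via the deep sets reference) as the key ingredient; you have filled in the details the paper defers, including the compactness-plus-Hausdorff argument for continuity of the inverse, and you correctly read the claim as a homeomorphism onto the image. No gaps.
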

\begin{proof}
The proof builds on the Newton-Girard formulae, which connect the moments of a sample set (sum-of-power) to the elementary symmetric polynomials (see~\cite{deepsets}).
\end{proof}

\begin{theorem}\label{thm:deepsets}
Suppose $h(\B{x}):[0,1]^M\rightarrow\BB{R}$ is a permutation-invariant continuous function, i.e., $h(\B{x})=h(x_1,\cdots,x_M)=h(\pi(x_1,\cdots,x_M))$ for any permutation $\pi$. Then there exist continuous functions $\bar{\rho}:\BB{R}^{M+1}\rightarrow\BB{R}$ and $\bar{\bm{\phi}}:\BB{R}\rightarrow\BB{R}^{M+1}$ such that %$\rho\left( \sum_{m=1}^M \bm{\phi}(x_m) \right) =  h(\B{x})$.
\begin{equation*}
h(\B{x}) = \bar{\rho}\left( \sum_{m=1}^M \bar{\bm{\phi}}(x_m) \right), \quad \forall \B{x}\in[0,1]^M.
\end{equation*}
\end{theorem}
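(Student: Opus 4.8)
The plan is to reduce the representation problem to the homeomorphism supplied by Lemma~\ref{lemma:homeomorphism} and then to invoke the Tietze extension theorem to upgrade a map defined on a compact ``moment variety'' to a \emph{globally} continuous outer function $\bar\rho$. First I would note that since $h$ is permutation-invariant, its value depends only on the sorted version of its argument: for every $\B{x}\in[0,1]^M$ there is a permutation $\pi$ with $\pi(\B{x})\in\mathcal{X}$, where $\mathcal{X}=\{[x_1;\cdots;x_M]\in[0,1]^M\mid x_1\leq\cdots\leq x_M\}$ is the closed sorted simplex of Lemma~\ref{lemma:homeomorphism}, and $h(\B{x})=h(\pi(\B{x}))$. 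Hence it suffices to represent the restriction $h|_{\mathcal{X}}$, and then extend by symmetry.

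Next, with $\bar{\bm{\phi}}(z)=[1;z;\cdots;z^M]$ and $\B{q}(\B{x})=\sum_{m=1}^M\bar{\bm{\phi}}(x_m)$ as in the lemma, I would take $\bar{\bm{\phi}}$ as the inner map and define, on the image $\mathcal{Y}\coloneqq\B{q}(\mathcal{X})\subset\BB{R}^{M+1}$, the function $g(\B{y})\coloneqq h(\B{q}^{-1}(\B{y}))$. Because $\mathcal{X}$ is compact and, by Lemma~\ref{lemma:homeomorphism}, $\B{q}$ is a homeomorphism onto its image, the set $\mathcal{Y}$ is a compact (hence closed) subset of $\BB{R}^{M+1}$ and $\B{q}^{-1}:\mathcal{Y}\to\mathcal{X}$ is continuous; therefore $g$ is continuous on $\mathcal{Y}$ as a composition of continuous maps. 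I would then apply the Tietze extension theorem to extend $g$ from the closed set $\mathcal{Y}$ to a continuous $\bar{\rho}:\BB{R}^{M+1}\to\BB{R}$, and set that as the outer map.

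Finally, to verify the identity, take an arbitrary $\B{x}\in[0,1]^M$ and a sorting permutation $\pi$ (so $\pi(\B{x})\in\mathcal{X}$). Since $\sum_{m=1}^M\bar{\bm{\phi}}(x_m)$ is itself permutation-invariant in the entries of $\B{x}$, it equals $\B{q}(\pi(\B{x}))\in\mathcal{Y}$, so $\bar{\rho}\!\left(\sum_{m=1}^M\bar{\bm{\phi}}(x_m)\right)=g(\B{q}(\pi(\B{x})))=h(\pi(\B{x}))=h(\B{x})$, as claimed, with both $\bar{\rho}$ and $\bar{\bm{\phi}}$ continuous. The main subtlety — and the only nontrivial point beyond bookkeeping — is that the naturally-defined $\bar{\rho}$ lives a priori only on the curved compact ``moment set'' $\mathcal{Y}=\B{q}(\mathcal{X})$ rather than on all of $\BB{R}^{M+1}$; this is exactly why one needs Lemma~\ref{lemma:homeomorphism} (to guarantee $\B{q}^{-1}$ exists and is continuous, a continuous bijection from a compact space to a Hausdorff space being automatically a homeomorphism) together with compactness of $\mathcal{Y}$ (to guarantee it is closed, so that Tietze applies). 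The sorting reduction and the permutation-invariance of the inner sum are routine.
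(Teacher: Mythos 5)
Your proof follows the same route as the paper's: take $\bar{\bm{\phi}}(z)=[1;z;\cdots;z^M]$, use the homeomorphism $\B{q}$ of Lemma~\ref{lemma:homeomorphism} to define $\bar{\rho}=h\circ\B{q}^{-1}$, and pass from the sorted simplex $\mathcal{X}$ to all of $[0,1]^M$ via the permutation-invariance of both $h$ and the inner sum. The one point where you go beyond the paper is the Tietze extension step: the paper asserts that $\B{q}^{-1}$ is ``a continuous function from $\BB{R}^{M+1}$ to $\mathcal{X}$,'' whereas it is defined only on the compact image $\B{q}(\mathcal{X})$, so your explicit extension of $h\circ\B{q}^{-1}$ from that closed set to all of $\BB{R}^{M+1}$ is precisely the detail needed for $\bar{\rho}:\BB{R}^{M+1}\rightarrow\BB{R}$ to be well-defined as stated in the theorem; the rest of your argument matches the paper's.
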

\begin{proof}
We choose $\bar{\bm{\phi}}(z)=[1;z;\cdots;z^M]$ and $\bar{\rho}(\cdot)=h(\B{q}^{-1}(\cdot))$, where $\B{q}(\cdot)$ is defined in Lemma \ref{lemma:homeomorphism}.
Note that since $\B{q}(\cdot)$ is a homeomorphism, $\B{q}^{-1}(\cdot)$ exists and it is a continuous function from $\BB{R}^{M+1}$ to $\mathcal{X}$. Therefore, $\bar{\rho}$ is also a continuous function from $\BB{R}^{M+1}$ to $\BB{R}$, and $\bar{\rho}\left( \sum_{m=1}^M \bar{\bm{\phi}}(x_m) \right)=\bar{\rho}(\B{q}(\B{x}))=h(\B{q}^{-1}(\B{q}(\B{x})))=h(\B{x})$ for $\B{x}\in\mathcal{X}$. Finally, note that for any $\B{x}\in[0,1]^M$, there exists some permutation $\pi$ such that $\pi(\B{x})\in\mathcal{X}$. 
Then because both $\bar{\rho}(\B{q}(\B{x}))$ and $h(\B{x})$ are permutation-invariant, we have $\bar{\rho}\left( \B{q}(\B{x}) \right) = \bar{\rho}\left( \B{q}(\pi(\B{x})) \right) = h(\pi(\B{x})) = h(\B{x})$ for all $\B{x}\in[0,1]^M$.
\end{proof}

Theorem~\ref{thm:deepsets} focuses on scalar valued permutation-invariant continuous functions with hypercubic input space $[0,1]^M$, i.e., each element in the input set is a scalar. In contrast, our learning target function $[\fav;\tauav]$ in \cref{eq:learningmodel} is a vector valued function with a bounded input space, and each element in the input set is also a vector. However, Theorem~\ref{thm:deepsets} can be generalized in a straightforward manner by the following corollary.

\begin{corollary}
\label{corollary:homo-vector}
Suppose $\B{x}^{(1)},\B{x}^{(2)},\cdots,\B{x}^{(M)}$ are $M$ bounded vectors in $\BB{R}^{D_1}$, and $h(\B{x}^{(1)},\cdots,\B{x}^{(M)})$ is a continuous permutation-invariant function from $\BB{R}^{M\times D_1}$ to $\BB{R}^{D_2}$, i.e., $h(\B{x}^{(1)},\cdots,\B{x}^{(M)})=h(\B{x}^{\pi(1)},\cdots,\B{x}^{\pi(M)})$ for any permutation $\pi$. Then $h(\B{x}^{(1)},\cdots,\B{x}^{(M)})$ can be approximated arbitrarily close in the proposed architecture in \cref{eq:learningmodel}.
\end{corollary}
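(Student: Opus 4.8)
The plan is to reduce \cref{corollary:homo-vector} to \cref{thm:deepsets} in three moves: split the vector output coordinatewise, replace the scalar power-feature map $\bar{\bm\phi}(z)=[1;z;\cdots;z^M]$ of Lemma~\ref{lemma:homeomorphism} by a \emph{multivariate} monomial map, and pass from exact representation to DNN approximation on the compact closure of the bounded domain. For the first move I would write $h=[h_1;\cdots;h_{D_2}]$; each $h_\ell$ is continuous and permutation-invariant, and if each admits a deep-sets representation using a \emph{common} feature map, then stacking the outer maps yields the vector-valued representation. So it suffices to treat scalar $h$.

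For the core argument, let $\mathcal{K}\subset\BB{R}^{M\times D_1}$ be a compact, permutation-invariant set containing the domain (e.g.\ a ball of the given bound, which is already invariant under permuting the $M$ arguments), on which $h$ is continuous. I would define $\bar{\bm\psi}:\BB{R}^{D_1}\to\BB{R}^{L}$ to collect all monomials $\B{z}^{\bm\alpha}$ with multi-index $0\le|\bm\alpha|\le M$, and set $\B{Q}(\B{x}^{(1)},\cdots,\B{x}^{(M)})=\sum_{m=1}^{M}\bar{\bm\psi}(\B{x}^{(m)})$, whose coordinates are exactly the multisymmetric power sums of degree at most $M$. The key step — the genuine generalization of the Newton--Girard argument underlying Lemma~\ref{lemma:homeomorphism} — is that over $\BB{R}$ these power sums generate the ring of $S_M$-invariant (multisymmetric) polynomials in the $M$ vectors, so that $\B{Q}(\B{x})=\B{Q}(\B{y})$ precisely when $\B{y}$ is a permutation of $\B{x}$. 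Consequently $\B{Q}$ descends to a continuous \emph{injection} on the quotient $\mathcal{K}/S_M$; since that quotient is compact and $\BB{R}^{L}$ is Hausdorff, the descended map is a homeomorphism onto its (compact) image $\mathcal{Q}$ — the exact analogue of Lemma~\ref{lemma:homeomorphism}.

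Given this, $h$ descends to a continuous function on $\mathcal{K}/S_M$, so composing with the inverse homeomorphism gives a continuous $\bar\rho_0:\mathcal{Q}\to\BB{R}$ with $h=\bar\rho_0\circ\B{Q}$ on $\mathcal{K}$, which I would extend to a continuous $\bar\rho:\BB{R}^{L}\to\BB{R}$ by the Tietze extension theorem. This is already an exact ``deep-sets'' representation of $h$ with continuous $\bar{\bm\psi}$ and $\bar\rho$. To put it into the form of \cref{eq:learningmodel}, I would fix $\varepsilon>0$ and invoke universal approximation twice: pick a DNN $\bm\phi\approx\bar{\bm\psi}$ uniformly on the compact set of admissible single-neighbor inputs, which keeps $\sum_m\bm\phi(\B{x}^{(m)})$ within a fixed compact neighborhood $\mathcal{Q}'$ of $\mathcal{Q}$ and close to $\B{Q}(\B{x})$; then pick a DNN $\bm\rho\approx\bar\rho$ uniformly on $\mathcal{Q}'$. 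A triangle inequality together with uniform continuity of $\bar\rho$ on $\mathcal{Q}'$ bounds $\|h(\B{x})-\bm\rho(\sum_m\bm\phi(\B{x}^{(m)}))\|$ by a quantity the two tolerances drive below $\varepsilon$; stacking over the $D_2$ output coordinates finishes the proof.

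The main obstacle is the invariant-theoretic fact in the second paragraph: that multisymmetric power sums of degree $\le M$ separate the $S_M$-orbits of $M$-tuples of vectors (equivalently, generate the corresponding ring of invariants over $\BB{R}$). This is what replaces the elementary symmetric-polynomial / Newton--Girard identities available in the scalar case, and it is the only place where the vector-valued nature of the set elements genuinely enters. Everything else — the coordinatewise reduction, compactness upgrading the continuous bijection to a homeomorphism (hence continuity of the inverse), the Tietze extension, and the two-stage DNN approximation with an explicit error budget — is routine.
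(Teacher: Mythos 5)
Your proof is correct, but it takes a genuinely different route from the paper's. The paper reduces the vector-element case to \cref{thm:deepsets} by positing a bijection from the bounded subset of $\BB{R}^{D_1}$ to $[0,1]$ ``after discretization, with finite but arbitrary precision,'' then stacks $D_2$ scalar outputs and invokes universal approximation; the discretization step is informal (such a bijection cannot be continuous, whereas \cref{thm:deepsets} is stated for continuous $h$, so the reduction is really an approximate one folded into the final universal-approximation step). You instead generalize the Newton--Girard mechanism of \cref{lemma:homeomorphism} directly to vector-valued set elements: the multivariate power sums $\sum_m (\B{x}^{(m)})^{\bm\alpha}$, $|\bm\alpha|\le M$, generate the ring of multisymmetric ($S_M$-invariant) polynomials in characteristic zero (Weyl's polarization theorem), hence separate $S_M$-orbits, so your $\B{Q}$ descends to a continuous injection on the compact quotient $\mathcal{K}/S_M$ and is therefore a homeomorphism onto its image; Tietze extension and a two-stage universal-approximation argument with a modulus-of-continuity error budget then finish the job. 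What your route buys is an \emph{exact} continuous sum-decomposition $h=\bar\rho\circ\B{Q}$ before any approximation enters, at the price of the invariant-theoretic input (orbit separation by low-degree power sums), which you correctly flag as the one non-routine ingredient. What the paper's route buys is brevity and a direct reuse of \cref{thm:deepsets}, at the price of rigor in the discretization step. Your common-feature-map observation for stacking the $D_2$ output coordinates, the compact-to-Hausdorff upgrade of the bijection to a homeomorphism, and the triangle-inequality bookkeeping are all sound.
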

\begin{proof}
First, there exists a bijection from the bounded vector space in $\BB{R}^{D_1}$ to $[0,1]$ after discretization, with finite but arbitrary precision. Thus, Theorem~\ref{thm:deepsets} is applicable.
Second, we apply Theorem~\ref{thm:deepsets} $D_2$ times and stack $D_2$ scalar-valued functions to represent the vector-valued function with output space $\BB{R}^{D_2}$.
Finally, because DNNs are universal approximators for continuous functions~\cite{csaji2001approximation}, the proposed architecture in~\cref{eq:learningmodel} can approximate any $h(\B{x}^{(1)},\cdots,\B{x}^{(M)})$ arbitrarily close.
\end{proof}
 
\subsection{Heterogeneous $K$-Group Permutation-Invariant DNN}
Different from the homogeneous setting, the inputs to functions $\fav^{\mathcal{I}(i)}$ and $\tauav^{\mathcal{I}(i)}$ in \cref{eq:hetero-interactions} are $K$ different sets. First, we define \emph{permutation-invariance} in the heterogeneous case. Intuitively, we expect that the following equality holds:
\begin{equation*}
\fav^{\mathcal{I}(i)}(\set_{\mathrm{type}_1}^{(i)},\cdots,\set_{\mathrm{type}_K}^{(i)}\!)  = \fav^{\mathcal{I}(i)}(\pi_1(\set_{\mathrm{type}_1}^{(i)}\!),\cdots,\pi_K(\set_{\mathrm{type}_K}^{(i)\!}))    
\end{equation*}
for any permutations $\pi_1,\cdots,\pi_K$ (similarly for $\tauav^{\mathcal{I}(i)}$). Formally, we define $K$-group permutation invariance as follows. 
\begin{definition}[$K$-group permutation invariance]
Let $\B{x}^{(k)}=[x^{(k)}_1;\cdots;x^{(k)}_{M_k}]\in[0,1]^{M_k}$ for $1\leq k \leq K$, and $\B{x}=[\B{x}^{(1)};\cdots;\B{x}^{(K)}]\in[0,1]^{M_K}$, where $M_K=\sum_{k=1}^K M_k$. $h(\B{x}):\BB{R}^{M_K}\rightarrow\BB{R}$ is $K$-group permutation-invariant if
\begin{equation*}
h([\B{x}^{(1)};\cdots;\B{x}^{(K)}]) = h([\pi_1(\B{x}^{(1)});\cdots;\pi_K(\B{x}^{(K)})])
\end{equation*}
for any permutations $\pi_1,\pi_2,\cdots,\pi_K$.
\end{definition}

For example, $h(x_1,x_2,y_1,y_2)=\max\{x_1,x_2\}+2\cdot\max\{y_1,y_2\}$ is a 2-group permutation-invariant function, because we can swap $x_1$ and $x_2$ or swap $y_1$ and $y_2$, but if we interchange $x_1$ and $y_1$ the function value may vary. In addition, the $\fav^{\la}$ function in Example~\ref{example:hetero-system} is a 3-group permutation-invariant function.

Similar to Lemma \ref{lemma:homeomorphism}, in order to handle ambiguity due to permutation, we define $\mathcal{X}_{M_k}=\{[x_1;\cdots;x_{M_k}]\in[0,1]^{M_k} | x_1\leq\cdots\leq x_{M_k}\}$ and
\begin{equation*}
\mathcal{X}_K=\{[\B{x}^{(1)};\cdots;\B{x}^{(K)}] \in[0,1]^{M_K} | \B{x}^{(k)} \in \mathcal{X}_{M_k},\forall k\}.
\end{equation*}
Finally, we show how a $K$-group permutation-invariant function can be approximated \changed{via} the following theorem.

\begin{figure}
\includegraphics[width=\linewidth]{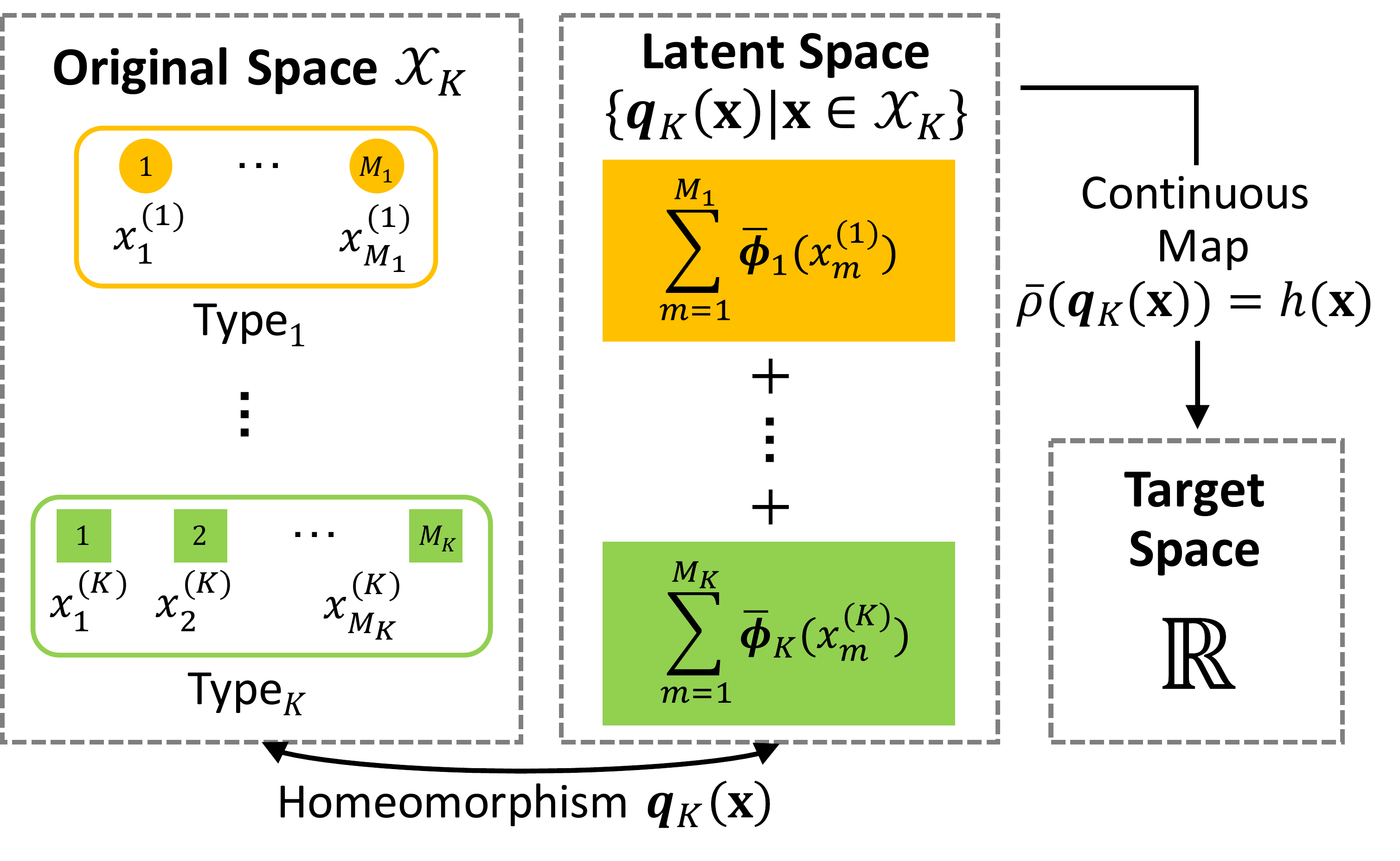}
\caption{Illustration of \Cref{thm:deepsets-hete}. We first find a homeomorphism $\B{q}_K(\cdot)$ between the original space and the latent space, and then find a continuous function $\bar{\rho}(\cdot)$ such that $\bar{\rho}(\B{q}_K(\cdot))=h(\cdot)$.}
\label{fig:hetero_illustration}
\end{figure}

\begin{theorem}\label{thm:deepsets-hete}
% Let $\B{x}^{(k)}=[x^{(k)}_1;\cdots;x^{(k)}_{M_k}]\in[0,1]^{M_k}$ for $1\leq k \leq K$, and $\B{x}=[\B{x}^{(1)};\cdots;\B{x}^{(K)}]\in[0,1]^{\sum_{k=1}^K M_k}$.
$h(\B{x}):[0,1]^{M_K}\rightarrow\BB{R}$ is a $K$-group permutation-invariant continuous function if and only if it has the representation
\begin{equation*}
\begin{aligned}
h(\B{x}) &= \bar{\rho}\left( \sum_{m=1}^{M_1} \bar{\bm{\phi}}_1(x_m^{(1)}) + \cdots + \sum_{m=1}^{M_K} \bar{\bm{\phi}}_K(x_m^{(K)})\right) \\ &= \bar{\rho}\left( \sum_{k=1}^K\sum_{m=1}^{M_k} \bar{\bm{\phi}}_k(x_m^{(k)}) \right), \quad \forall \B{x}\in[0,1]^{M_K}
\end{aligned}
\end{equation*}
for some continuous outer and inner functions $\bar{\rho}:\BB{R}^{K+M_K}\rightarrow\BB{R}$ and $\bar{\bm{\phi}}_k:\BB{R}\rightarrow\BB{R}^{K+M_K}$ for $1\leq k \leq K$. 
\end{theorem}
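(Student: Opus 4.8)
The plan is to adapt the proof of \cref{thm:deepsets} from one group to $K$ groups by a direct-sum (block) construction, handling the two implications of the ``if and only if'' separately. The reverse direction is immediate: if $h$ admits the displayed representation, then it is continuous as a finite sum and composition of continuous maps, and for each fixed $k$ the partial sum $\sum_{m=1}^{M_k}\bar{\bm{\phi}}_k(x_m^{(k)})$ is unchanged under any permutation $\pi_k$ of $x_1^{(k)},\dots,x_{M_k}^{(k)}$, so the argument of $\bar\rho$, and hence $h$ itself, is $K$-group permutation-invariant.

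For the forward direction I would route each group into its own coordinate block. Put $d_k = M_k + 1$, and note $\sum_{k=1}^K d_k = M_K + K$, exactly the target output dimension. Define $\bar{\bm{\phi}}_k:\BB{R}\to\BB{R}^{M_K+K}$ to equal $[1;z;\cdots;z^{M_k}]$ on the $k$-th block of $d_k$ coordinates and $0$ elsewhere, and set
\[
\B{q}_K(\B{x}) \coloneqq \sum_{k=1}^K\sum_{m=1}^{M_k}\bar{\bm{\phi}}_k(x_m^{(k)}).
\]
On the $k$-th block, $\B{q}_K(\B{x})$ is exactly $\B{q}^{(k)}(\B{x}^{(k)}) \coloneqq \sum_{m=1}^{M_k}[1;x_m^{(k)};\cdots;(x_m^{(k)})^{M_k}]$, which is the map of \cref{lemma:homeomorphism} with $M=M_k$. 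Since $\mathcal{X}_K = \mathcal{X}_{M_1}\times\cdots\times\mathcal{X}_{M_K}$ and $\B{q}_K$ acts across disjoint blocks as the external product $\B{q}^{(1)}\times\cdots\times\B{q}^{(K)}$, \cref{lemma:homeomorphism} applied once per group, together with the fact that a finite product of homeomorphisms is a homeomorphism (and compactness of $\mathcal{X}_K$ for bicontinuity), shows $\B{q}_K$ is a homeomorphism of $\mathcal{X}_K$ onto $\mathrm{Im}(\B{q}_K)$; this is the picture in \cref{fig:hetero_illustration}.

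Next, define $\bar\rho \coloneqq h\circ\B{q}_K^{-1}$ on the compact set $\mathrm{Im}(\B{q}_K)$, where it is continuous, and extend it to a continuous $\bar\rho:\BB{R}^{M_K+K}\to\BB{R}$ by the Tietze extension theorem. By construction $\bar\rho(\B{q}_K(\B{x})) = h(\B{x})$ for every $\B{x}\in\mathcal{X}_K$. For an arbitrary $\B{x}\in[0,1]^{M_K}$, choose permutations $\pi_k$ that sort each group, so that $\B{x}' \coloneqq [\pi_1(\B{x}^{(1)});\cdots;\pi_K(\B{x}^{(K)})]\in\mathcal{X}_K$. Each inner sum in $\B{q}_K$ is permutation-invariant within its group, so $\B{q}_K(\B{x}) = \B{q}_K(\B{x}')$, and $h$ is $K$-group permutation-invariant, so $h(\B{x}) = h(\B{x}')$; hence $\bar\rho(\B{q}_K(\B{x})) = \bar\rho(\B{q}_K(\B{x}')) = h(\B{x}') = h(\B{x})$, which is the claimed representation.

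The only step that really needs care is verifying that $\B{q}_K$ is a homeomorphism onto its image, and the block construction is precisely what makes this reduce to \cref{lemma:homeomorphism}: sending each $\bar{\bm{\phi}}_k$ into disjoint coordinates forces the global sum to decouple into the $K$ single-group power-sum maps, so injectivity and bicontinuity follow group-by-group. Everything else — the Tietze extension and the sorting argument — is routine, though I would state the Tietze step explicitly since the theorem requires $\bar\rho$ to be defined on all of $\BB{R}^{M_K+K}$ rather than just on $\mathrm{Im}(\B{q}_K)$.
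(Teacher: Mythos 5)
Your proof is correct and follows essentially the same route as the paper's: block-structured $\bar{\bm{\phi}}_k$ mapping each group's power-sum vector into disjoint coordinates, $\B{q}_K$ as a homeomorphism on $\mathcal{X}_K$ via \cref{lemma:homeomorphism}, $\bar{\rho} = h\circ\B{q}_K^{-1}$, and the sorting argument to pass from $\mathcal{X}_K$ to all of $[0,1]^{M_K}$. Your explicit Tietze extension and the observation that $\B{q}_K$ is a homeomorphism onto its image (rather than onto all of $\BB{R}^{K+M_K}$) are welcome refinements of details the paper leaves implicit.
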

\begin{proof}
The sufficiency follows from that $h(\B{x})$ is $K$-group permutation-invariant by construction. For the necessary condition, we need to find continuous functions $\bar{\rho}$ and $\{\bar{\bm{\phi}}_k\}_{k=1}^K$ given $h$.
We define $\bar{\bm{\phi}}_k(x):\BB{R}\rightarrow\BB{R}^{K+M_K}$ as
\begin{equation*}
\bar{\bm{\phi}}_k(x) = [\B{0}_{M_1};\cdots;\B{0}_{M_{k-1}};
\begin{bmatrix}
1 \\
x \\
\vdots \\
x^{M_k}
\end{bmatrix};
\B{0}_{M_{k+1}};\cdots;\B{0}_{M_K}] 
\end{equation*}
where $\B{0}_{M_k}=[0;\cdots;0]\in\BB{R}^{M_k+1}$. 
Then 
\begin{equation*}
\B{q}_K(\B{x}) = \sum_{k=1}^K\sum_{m=1}^{M_k} \bar{\bm{\phi}}_k(x_m^{(k)})    
\end{equation*}
is a homeomorphism from $\mathcal{X}_K\subseteq\BB{R}^{M_K}$ to $\BB{R}^{K+M_K}$ from Lemma \ref{lemma:homeomorphism}. We choose $\bar{\rho}:\BB{R}^{K+M_K}\rightarrow\BB{R}$ as $\bar{\rho}(\cdot)=h(\B{q}_K^{-1}(\cdot))$ which is continuous, because both $\B{q}_K^{-1}$ and $h$ are continuous. Then $\bar{\rho}(\B{q}_K(\B{x}))=h(\B{x})$ for $\B{x}\in\mathcal{X}_K$. Finally, because i) for all $\B{x}=[\B{x}^{(1)};\cdots;\B{x}^{(K)}]$ in $[0,1]^{M_K}$ there exist permutations $\pi_1,\cdots,\pi_K$ such that $[\pi_1(\B{x}^{(1)});\cdots;\pi_K(\B{x}^{(K)})]\in\mathcal{X}_K$; and ii) both $\bar{\rho}(\B{q}_K(\B{x}))$ and $h(\B{x})$ are $K$-group permutation-invariant, we have $\bar{\rho}(\B{q}_K(\B{x}))=h(\B{x})$ for $\B{x}\in[0,1]^{M_K}$.
\end{proof}

\Cref{fig:hetero_illustration} depicts the key idea of \Cref{thm:deepsets-hete}. Moreover, we provide a 2-group permutation-invariant function example to highlight the roles of $\bm{\phi}$ and $\rho$ in the heterogeneous case.
\begin{example}[2-group permutation-invariant function]
Consider $h(x_1,x_2,y_1,y_2)=\max\{x_1,x_2\} + 2\cdot\max\{y_1,y_2\}$, which is $2$-group permutation-invariant. Then we define $\bm{\phi}_x(x)=[e^{\alpha x};xe^{\alpha x};0;0]$, $\bm{\phi}_y(y)=[0;0;e^{\alpha y};ye^{\alpha y}]$ and $\rho([a;b;c;d]) = b/a + 2\cdot d/c$. Note that
\begin{equation*}
\begin{aligned}
&\rho(\bm{\phi}_x(x_1)+\bm{\phi}_x(x_2)+\bm{\phi}_y(y_1)+\bm{\phi}_y(y_2))\\=&
\frac{x_1e^{\alpha x_1} + x_2e^{\alpha x_2}}{e^{\alpha x_1} + e^{\alpha x_2}} +
2\cdot\frac{y_1e^{\alpha y_1} + y_2e^{\alpha y_2}}{e^{\alpha y_1} + e^{\alpha y_2}},
\end{aligned}
\end{equation*}
which is asymptotically equal to $\max\{x_1,x_2\} + 2\cdot\max\{y_1,y_2\}$ as $\alpha\rightarrow+\infty$.
\end{example}

Similar to the homogeneous case, Theorem~\ref{thm:deepsets-hete} can generalize to vector-output functions with a bounded input space by applying the same argument as in Corollary~\ref{corollary:homo-vector}. We propose the following \emph{heterogeneous deep set} structure to model the heterogeneous functions $\B{f}_a^{\mathcal{I}(i)}$ and $\bm{\tau}_a^{\mathcal{I}(i)}$:
\begin{equation}
\begin{aligned}
&\begin{bmatrix}
\fav^{\mathcal{I}(i)}(\set_{\mathrm{type}_1}^{(i)},\cdots,\set_{\mathrm{type}_K}^{(i)}) \\
\tauav^{\mathcal{I}(i)}(\set_{\mathrm{type}_1}^{(i)},\cdots,\set_{\mathrm{type}_K}^{(i)})
\end{bmatrix}
\\
\approx&
\bm{\rho}_{\mathcal{I}(i)}\left(\sum_{k=1}^K\sum_{\B{x}^{(ij)}\in \set_{\mathrm{type}_k}^{(i)}}\bm{\phi}_{\mathcal{I}(j)}(\B{x}^{(ij)})\right)\coloneqq
\begin{bmatrix}
\favhat^{(i)} \\ \tauavhat^{(i)}
\end{bmatrix}.
\end{aligned}
\label{eq:learningmodel-hetero} 
\end{equation}

\begin{example}[Use of 3-group permutation-invariant function for multirotors]
For example, in the heterogeneous system provided by Example~\ref{example:hetero-system} (as depicted in \cref{fig:fig1}(a)), we have
\begin{equation*}
\begin{aligned}
&\begin{bmatrix}
\fav^{(3)} \\ \tauav^{(3)}
\end{bmatrix} = 
\begin{bmatrix}
\fav^{\la}(\set_{\sm}^{(3)},\set_{\la}^{(3)},\set_{\env}^{(3)}) \\
\tauav^{\la}(\set_{\sm}^{(3)},\set_{\la}^{(3)},\set_{\env}^{(3)})
\end{bmatrix}
\\
&\approx \bm{\rho}_{\la}\left(\bm{\phi}_{\sm}(\B{x}^{(31)})+\bm{\phi}_{\sm}(\B{x}^{(32)})+\bm{\phi}_{\env}(\B{x}^{(34)})\right),
\end{aligned}    
\end{equation*}
for the large robot 3, where $\bm{\phi}_{\sm}$ captures the interaction with the small robot 1 and 2, and $\bm{\phi}_{\env}$ captures the interaction with the environment 4, e.g., ground effect and air drag.
\end{example}

The structure in \cref{eq:learningmodel-hetero} has many valuable properties:
\begin{itemize}
\item \textbf{Representation ability.} Since Theorem~\ref{thm:deepsets-hete} is necessary and sufficient, we do not lose approximation power by using this constrained framework, i.e., any $K$-group permutation-invariant function can be learned by \cref{eq:learningmodel-hetero}.  We demonstrate strong empirical performance using relatively compact DNNs for $\bm{\rho}_{\mathcal{I}(i)}$ and $\bm{\phi}_{\mathcal{I}(j)}$.

\item \textbf{Computational and sampling efficiency and scalability.} Since the input dimension of $\bm{\phi}_{\mathcal{I}(j)}$ is always the same as the single vehicle case, the feed-forward computational complexity of $\cref{eq:learningmodel-hetero}$ grows linearly with the number of neighboring vehicles. Moreover, the number of neural networks ($\bm{\rho}_{\mathcal{I}(i)}$ and $\bm{\phi}_{\mathcal{I}(j)}$) we need is $2K$, which grows linearly \changed{with} the number of robot types. In practice, we found that one hour flight data is sufficient to accurately learn interactions between two to five multirotors.

\item \textbf{Generalization to varying swarm size.} \changed{Given learned $\bm{\phi}_{\mathcal{I}(j)}$ and $\bm{\rho}_{\mathcal{I}(i)}$ functions, \cref{eq:learningmodel-hetero} can be used to predict interactions for any swarm size. In other words, we can accurately model swarm sizes (slightly) larger than those used for training. In practice, we found that our model can give good predictions for five multirotor swarms, despite only being trained on one to three multirotor swarms. Theoretical analysis on this generalizability is an interesting future research direction.}
\end{itemize}

\subsection{Spectral Normalization for Robustness and Generalization}
\label{sec:spectral}
To improve a property of robustness and generalizability of DNNs, we use spectral normalization~\cite{bartlett2017spectrally} for training optimization. Spectral normalization stabilizes a DNN training by constraining its Lipschitz constant. Spectrally\changed{-}normalized DNNs have been shown to generalize well, which is an indication of stability in machine learning.  Spectrally\changed{-}normalized DNNs have also been shown to be robust, which can be used to provide control-theoretic stability guarantees~\cite{liu2020robust,shi2019neural-lander}. The bounded approximation error assumption (Assumption \ref{assump:bounded_error}) in our control stability and robustness analysis (Sec.~\ref{sec:stability}) also relies on spectral normalization of DNNs.  

Mathematically, the Lipschitz constant $\|\B{g}\|_{\mathrm{Lip}}$ of a function $\B{g}(\cdot)$ is defined as the smallest value such that:
\[\forall \, \B{x}, \B{x}':\ \|\B{g}(\B{x})-\B{g}(\B{x}')\|_2/\|\B{x}-\B{x}'\|_2\leq \|\B{g}\|_{\mathrm{Lip}}.\]
Let $\B{g}(\B{x},\bm{\theta})$ denote a ReLU DNN parameterized by the DNN weights $\bm{\theta}={\B{W}_1,\cdots,\B{W}_{L+1}}$:
\begin{equation}
\B{g}(\B{x},\bm{\theta}) = \B{W}_{L+1}\sigma(\B{W}_L\sigma(\cdots \sigma(\B{W}_1\B{x})\cdots)),
\end{equation}
where the activation function $\sigma(\cdot)=\max(\cdot,0)$ is called the element-wise ReLU function.
In practice, we apply the spectral normalization to the weight matrices in each layer after each batch gradient descent as follows:
\begin{equation}
\B{W}_i \leftarrow \B{W}_i / \|\B{W}_i\|_2 \cdot \gamma^{\frac{1}{L+1}},i\in[1,L+1],
\label{eq:sn}
\end{equation}
where $\|\B{W}_i\|_2$ is the maximum singular value of $\B{W}_i$ and $\gamma$ is a hyperparameter. With \cref{eq:sn}, $\|\B{g}\|_{\mathrm{Lip}}$ will be upper bounded by $\gamma$. Since spectrally-normalized $\B{g}$ is $\gamma-$Lipschitz continuous, it is robust to noise $\Delta\B{x}$, i.e., $\|\B{g}(\B{x}+\Delta\B{x})-\B{g}(\B{x})\|_2$ is always bounded by $\gamma\|\Delta\B{x}\|_2$. In this paper, we apply the spectral normalization on both the $\bm{\phi}_{\mathcal{I}(j)}(\cdot)$ and $\bm{\rho}_{\mathcal{I}(i)}(\cdot)$ DNNs in \cref{eq:learningmodel-hetero}.

\subsection{Curriculum Learning}
Training DNNs in \cref{eq:learningmodel-hetero} to approximate $\fav^{\mathcal{I}(i)}$ and $\tauav^{\mathcal{I}(i)}$ requires collecting close formation flight data.
However, the downwash effect causes the nominally controlled multirotors (without compensation for the interaction forces) to move apart from each other.
Thus, we use a curriculum/cumulative learning approach: first, we collect data for two multirotors without a DNN and learn a model.
Second, we repeat the data collection using our learned model as a feed-forward term in our controller, which allows closer-proximity flight of the two vehicles.
Third, we repeat the procedure with increasing number of vehicles, using the current best model.

Note that our data collection and learning are independent of the controller used and independent of the $\fav^{\mathcal{I}(i)}$ or $\tauav^{\mathcal{I}(i)}$ compensation.
In particular, if we actively compensate for the learned $\fav^{\mathcal{I}(i)}$ or $\tauav^{\mathcal{I}(i)}$, this will only affect $\bm{\eta}$ in \eqref{eq:pos_dynamics} and not the observed $\fav^{\mathcal{I}(i)}$ or $\tauav^{\mathcal{I}(i)}$.

\section{Interaction-Aware Multi-Robot Planning}
\label{sec:planning}
We approximately solve \eqref{eq:motion-planning} offline by using two simplifications: i) we plan sequentially for each robot, treating other robots as dynamic obstacles with known trajectories, and ii) we use double-integrator dynamics plus learned interactions.
Both simplifications are common for multi-robot motion planning with applications to multirotors~\cite{morgan2016swarm,DBLP:journals/ral/LuisS19}.
Such a motion planning approach can be easily distributed and is complete for planning instances that fulfill the \emph{well-formed infrastructure} property~\cite{WellFormedInfrastructures}.
However, the interaction forces \eqref{eq:learningmodel-hetero} complicate motion planning significantly, because the interactions are highly nonlinear and robot dynamics are not independent from each other anymore.

For example, consider a three robot team with two small and one large robot as in \cref{fig:fig1}(a).
Assume that we already have valid trajectories for the two small robots and now plan a motion for the large robot.
The resulting trajectory might result in a significant downwash force for the small robots if the large robot flies directly above the small ones.
This strong interaction might invalidate the previous trajectories of the small robots or even violate their interaction force limits $\famax^{\sm}$ and $\tauamax^{\sm}$.
Furthermore, the interaction force is asymmetric and thus it is not sufficient to only consider the interaction force placed on the large robot.
We solve this challenge by directly limiting the change of the interaction forces placed on all neighbors when we plan for a robot.
This concept is similar to trust regions in sequential optimization~\cite{foustOptimalGuidanceControl2020}.

The simplified state is $\B{x}^{(i)}=[\B{p}^{(i)};\B{v}^{(i)};\favhat^{(i)}]$ and the simplified dynamics \eqref{eq:hetero-onerobot} become:
\begin{equation}
\dot{\B{x}}^{(i)} 
= \B{f}^{(i)}(\B{x}^{(i)}, \B{u}^{(i)}) = 
\begin{bmatrix}
\B{v}^{(i)} \\ \B{u}^{(i)} + \favhat^{(i)} \\ \dot{\hat{\B{f}}}_{a}^{(i)}
\end{bmatrix}.
\label{eq:planning-dynamics}
\end{equation}
These dynamics are still complex and nonlinear because of $\favhat^{(i)}$, which is the learned interaction force represented by DNNs in~\cref{eq:learningmodel-hetero}.
We include $\favhat^{(i)}$ in our state space to simplify the enforcement of the bound on the interaction force in~\eqref{eq:motion-planning}.

We propose a novel hybrid two-stage planning algorithm, see \cref{alg:planning}, leveraging the existing approaches while still highlighting the importance of considering interactive forces/torques in the planning. The portions of the pseudo-code in \cref{alg:planning} that significantly differ from the existing methods to our approach are highlighted.
In Stage 1, we find initial feasible trajectories using a kinodynamic sampling-based motion planner.
Note that any kinodynamic planner can be used for Stage 1. 
In Stage 2, we use sequential convex programming (SCP)~\cite{morgan2016swarm,foustOptimalGuidanceControl2020,SCP} to refine the initial solution to reach the desired states exactly and to minimize our energy objective defined in \eqref{eq:motion-planning}.
Intuitively, Stage 1 identifies the homeomorphism class of the solution trajectories and fixes $t_f$, while Stage 2 finds the optimal trajectories to the goal within that homeomorphism class.
Both stages differ from similar methods in the literature~\cite{morgan2016swarm}, because they need to reason over the coupling of the robots caused by interaction forces $\favhat^{(i)}$. 

\SetAlFnt{\small\sf}
\begin{algorithm}
\DontPrintSemicolon
\SetInd{0.25em}{0.5em}

\SetKwFunction{RandomShuffle}{RandomShuffle}
\SetKwFunction{FindClosest}{FindClosest}
\SetKwFunction{UniformSample}{UniformSample}
\SetKwFunction{Propagate}{Propagate}
\SetKwFunction{StateValid}{StateValid}
\SetKwFunction{Add}{Add}
\SetKwFunction{ExtractSolution}{ExtractSolution}
\SetKwFunction{PostProcess}{PostProcess}
\SetKwFunction{TerminationConditionOne}{TerminationCondition1}
\SetKwFunction{Converged}{Converged}
\SetKwFunction{Converged}{Converged}
\SetKwFunction{SolveCP}{SolveCP}
\KwData{$\B{x}^{(i)}_0$, $\B{x}^{(i)}_f$, $\Delta t$}
\KwResult{$\mathcal X^{(i)}_\mathrm{sol} = \left(\B{x}^{(i)}_0,\B{x}^{(i)}_1,\B{x}^{(i)}_2, \ldots, \B{x}^{(i)}_{T^{(i)}}\right)$, $\mathcal U^{(i)}_\mathrm{sol} = \left(\B{u}^{(i)}_0,\B{u}^{(i)}_1,\B{u}^{(i)}_2, \ldots, \B{x}^{(i)}_{T^{(i)}-1} \right)$}

%\emph{/* Stage 1 */}\;
%$c^{(i)} \leftarrow \infty\,\, \forall i \in \{1,\ldots,n\}$ \Comment*{cost bound}
\Comment{Stage 1: Find duration $t_f$ and initial trajectories that are close to the goal state}
$c^{(i)} \leftarrow \infty, \mathcal X^{(i)}_\mathrm{sol} \leftarrow (), \mathcal U^{(i)}_\mathrm{sol} \leftarrow ()\,\, \forall i \in \{1,\ldots,N\}$ \label{alg:planning:init}\;
\Repeat{\TerminationConditionOne{} \label{alg:planning:term_cond1}}{\label{alg:planning:ao_loop}
    \tikzmk{A}\ForEach{$i\in$ \RandomShuffle{$\{1,\ldots,N\}$}\markline{-14pt} \label{alg:planning:robot_loop} \label{alg:planning:rrt_start}}{
        $\mathcal T = (\{\B{x}^{(i)}_0\}, \emptyset)$\;
        %\While{$\|FindClosest(\mathcal T, \B{x}^{(i)}_f) - \B{x}^{(i)}_f\| > \varepsilon$}{
        \RepeatForever{
            $\B{x}_\mathrm{rand} \leftarrow$ \UniformSample{$\mathcal{X}^{\mathcal{I}(i)}$} \label{alg:planning:xrand}\;
            $\B{x}_\mathrm{near} \leftarrow$ \FindClosest{$\mathcal T$, $\B{x}_\mathrm{rand}$} \label{alg:planning:xnear}\;
            $\B{u}_\mathrm{rand} \leftarrow$ \UniformSample{$\mathcal{U}^{\mathcal{I}(i)}$} \label{alg:planning:urand}\;
            $\B{x}_\mathrm{new}, c \leftarrow$ \Propagate{$\B{x}_\mathrm{near}$, $\B{u}_\mathrm{rand}$, $\Delta t$, $\{\mathcal X^{(j)}_\mathrm{sol} | j \neq i\}$} \label{alg:planning:prop}\;
            \tikzmk{A}\If{\StateValid{$\B{x}_\mathrm{new}, \{\mathcal X^{(j)}_\mathrm{sol} | j \neq i\}$} and $c \leq c^{(i)}$ \label{alg:planning:state_valid}\markline{-29pt}}{
                \Add{$\mathcal T, \B{x}_\mathrm{near} \to \B{x}_\mathrm{new}$} \label{alg:planning:add}\;
                \If{$\|\B{x}_\mathrm{new} - \B{x}^{(i)}_f\| \leq \varepsilon$}{
                    $c^{(i)} \leftarrow c$ \label{alg:planning:cost_update}\;
                    $\mathcal X^{(i)}_\mathrm{sol}, \mathcal U^{(i)}_\mathrm{sol} \leftarrow$ \ExtractSolution{$\mathcal T, \B{x}_\mathrm{new}$}\;
                    break \label{alg:planning:extract}\;
                }
            }
        }
        \label{alg:planning:rrt_end}
    }
}
$\mathcal X^{(i)}_\mathrm{sol}, \mathcal U^{(i)}_\mathrm{sol} \leftarrow$ \PostProcess{$\mathcal X^{(i)}_\mathrm{sol}, \mathcal U^{(i)}_\mathrm{sol}$} \label{alg:planning:postprocess}\;
\Comment{Stage 2: Refine trajectories sequentially; Based on SCP}
\Repeat{\Converged{} \label{alg:planning:stage2_end}}{\label{alg:planning:stage2_start}
    \ForEach{$i\in$ \RandomShuffle{$\{1,\ldots,N\}$}}{
        \tikzmk{A}$\mathcal X^{(i)}_\mathrm{sol}, \mathcal U^{(i)}_\mathrm{sol} \leftarrow$ \SolveCP{Eq. \eqref{eq:scp}, $\{\mathcal X^{(i)}_\mathrm{sol} | \forall i\}, \{\mathcal U^{(i)}_\mathrm{sol} | \forall i\}$}\markline{-22pt}
    }  
}
\caption{Interaction-aware motion planning}
\label{alg:planning}
\end{algorithm}
\subsection{Stage 1: Sampling-Based Planning using Interaction Forces}
\label{sec:AO-RRT}
For Stage 1, any kinodynamic single-robot motion planner can be extended.
For the coupled multi-robot setting in the present paper, we modify AO-RRT~\cite{AO-RRT}, which is is a meta-algorithm that uses the rapidly-exploring random tree (RRT) algorithm as a subroutine.

\textbf{Sampling-Based Planner:}
Our adaption of RRT (\crefrange{alg:planning:rrt_start}{alg:planning:rrt_end} in \cref{alg:planning}) works as follows.
First, a random state $\B{x}_\mathrm{rand}$ is uniformly sampled from the state space (\cref{alg:planning:xrand}) and the closest state $\B{x}_\mathrm{near}$ that is already in the search tree $\mathcal T$ is found (\cref{alg:planning:xnear}).
This search can be done efficiently in logarithmic time by employing a specialized data structured such as a kd-tree~\cite{kdtree} and requires the user to define a distance function on the state space.
Second, an action is uniformly sampled from the action space (\cref{alg:planning:urand}) and the dynamics \eqref{eq:hetero-onerobot} are forward propagated for a fixed time period $\Delta t$ using $\B{x}_\mathrm{near}$ as the initial condition, e.g., by using the Runge-Kutta method (\cref{alg:planning:prop}).
Note that this forward propagation directly considers the learned dynamics $\favhat^{(i)}$. 
Third, the new state $\B{x}_\mathrm{new}$ is checked for validity with respect to i) the state space (which includes $\favhat^{(i)}$), ii) collisions with other robots, and iii) change and bound of the neighbor's interaction forces (\cref{alg:planning:state_valid}).
The first validity check ensures that the interaction force of the robot itself is bounded, while the third check is a trust region and upper bound for the neighbor's interaction forces.

If $\B{x}_\mathrm{new}$ is valid, it is added as a child node of $\B{x}_\mathrm{near}$ in the search tree $\mathcal T$ (\cref{alg:planning:add}).
Finally, if $\B{x}_\mathrm{new}$ is within an $\varepsilon$-distance to the goal $\B{x}^{(i)}_f$, the solution can be extracted by following the parent pointers of each tree node starting from $\B{x}_\mathrm{new}$ until the root node $\B{x}^{(i)}_{0}$ is reached (\cref{alg:planning:extract}).

We note that our RRT \changed{steering method} departs from ones in the literature which either sample $\Delta t$, use a best-control approximation of the steer method in RRT, or use a combination of both $\Delta t$-sampling and best-control approximation~\cite{AO-RRT}.
We are interested in a constant $\Delta t$ for our optimization formulation in Stage 2. In this case, a best-control approximation would lead to a probabilistic incomplete planner~\cite{kunzKinodynamicRRTsFixed2015}.
\changed{We adopt a technique of goal biasing where we pick the goal state rather than $\B{x}_\mathrm{rand}$ in fixed intervals, in order to improve the convergence speed.}

While RRT is \changed{probabilistically} complete, it also almost surely converges to a suboptimal solution~\cite{RRTstarAndPRMstar}.
AO-RRT remedies this shortcoming by planning in a state-cost space and using RRTs sequentially with a monotonically decreasing cost bound.
The cost bound $c^{(i)}$ is initially set to infinity (\cref{alg:planning:init}) and the tree can only grow with states that have a lower cost associated with them (\cref{alg:planning:state_valid}).
Once a solution is found, the cost bound is decreased accordingly (\cref{alg:planning:cost_update}) and the search is repeated using the new cost bound (\cref{alg:planning:ao_loop}).
This approach is asymptotically optimal, but in practice the algorithm is terminated based on some condition, e.g., a timeout or a fixed number of iterations without improvements (\cref{alg:planning:term_cond1}).

\begin{figure}
\includegraphics[width=\linewidth]{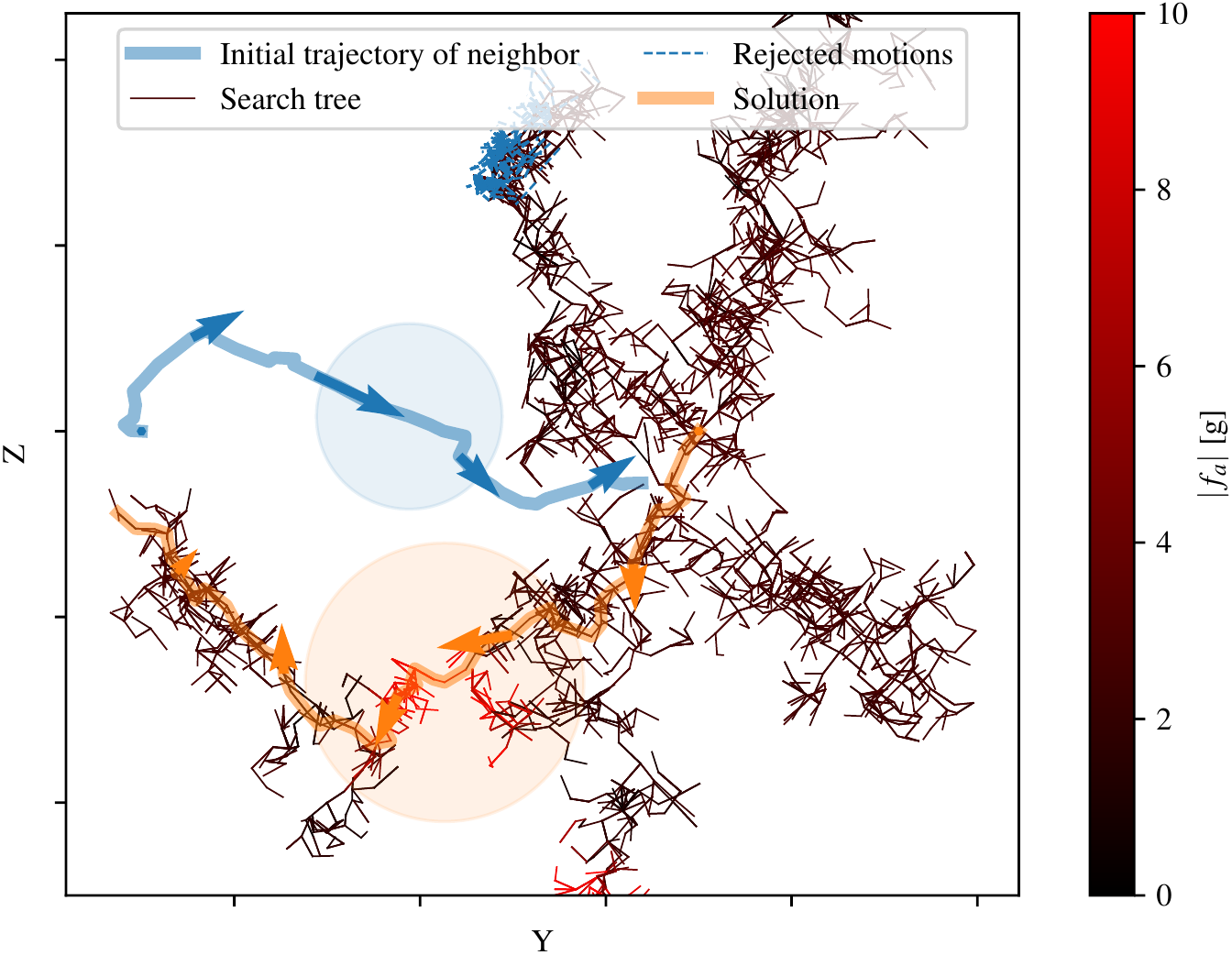}
\caption{Example for Stage 1 of our motion planning with learned dynamics. Here, we have an initial solution for a small (blue) robot and plan for a large (orange) robot. The created search tree of the large robot is color-coded by the magnitude of the interaction force on the orange robot. During the search, we reject states that would cause a significant change in the interaction force for the blue robot (edges in blue).}
\label{fig:plot6}
\end{figure}

\textbf{Modification of Sampling-Based Planner:} We extend AO-RRT to a sequential interaction-aware multi-robot planner by adding $\favhat^{(i)}$ and time to our state space and treating the other robots as dynamic obstacles.
As cost, we use a discrete approximation of the objective in \cref{eq:motion-planning}.
For each AO-RRT outer-loop iteration with a fixed cost bound, we compute trajectories sequentially using a random permutation of the robots (\cref{alg:planning:robot_loop}).
When we check the state for validity (\cref{alg:planning:state_valid}), we also enforce that the new state is not in collision with the trajectories of the other robots and that their interaction forces are bounded and within a trust region compared to their previous value, see \cref{fig:plot6} for visualization.
Here, the red edges show motions that cause large ($\approx \SI{10}{g}$) but admissible interaction forces on the orange robot, because the blue robot flies directly above it. The blue edges are candidate edges as computed in \cref{alg:planning:prop} and are not added to the search tree, because their motion would cause a violation of the interaction force trust region of the blue robot (condition in \cref{alg:planning:state_valid}). Once the search tree contains a path to the goal region, a solution is returned (orange path).

The output of the sequential planner (\cref{alg:planning:extract}) is a sequence of states $\mathcal X^{(i)}_\mathrm{sol}$ and actions $\mathcal U^{(i)}_\mathrm{sol}$, each to be applied for a duration of $\Delta t$. Note that the sequences might have different lengths for each robot. Implicitly, the sequences also defines $t_f$. Furthermore, the first element of each sequence is the robots' start state and the last element is within a $\varepsilon$-distance of the robots' goal state.
We postprocess this sequence of states to make it an appropriate input for the optimization, e.g., for uniform length (\cref{alg:planning:postprocess}). In practice, we found that repeating the last state and adding null actions, or (virtual) tracking of the computed trajectories using a controller are efficient and effective postprocessing techniques.

Other sampling-based methods can be used as foundation of the first stage as well, with similar changes in sequential planning, state-augmentation to include the interaction forces, state-validity checking, and postprocessing.

\subsection{Stage 2: Optimization-Based Motion Planning}

We employ sequential convex programming (SCP) for optimization.
SCP is a local optimization method for nonconvex problems that leverages convex optimization. The key concept is to convexify the nonconvex portions of the optimization problem by linearizing around a prior solution. The resulting convex problem instance is solved and a new solution obtained. The procedure can be repeated until convergence criteria are met.
Because of the local nature of this procedure, a good initial guess is crucial for high-dimensional and highly nonlinear system dynamics. In our case, we use the searched trajectories from Stage 1 in \Cref{sec:AO-RRT} as the initial guess.

We first adopt a simple zero-order hold temporal discretization of the dynamics \eqref{eq:planning-dynamics} using Euler integration:
\begin{equation}
    \B{x}^{(i)}_{k+1} = \B{x}^{(i)}_k + \dot{\B{x}}^{(i)}_k \Delta t.
    \label{eq:planning-discreteDynamics}
\end{equation}

Second, we linearize $\dot{\B{x}}^{(i)}_k$ around prior states $\bar{\B{x}}^{(i)}_k$ and actions $\bar{\B{u}}^{(i)}_k$:
\begin{equation}
    \dot{\B{x}}^{(i)}_k \approx \B{A}_k (\B{x}^{(i)}_k - \bar{\B{x}}^{(i)}_k) + \B{B}_k (\B{u}^{(i)}_k - \bar{\B{u}}^{(i)}_k) + \B{f}^{(i)}(\bar{\B{x}}^{(i)}_k, \bar{\B{u}}^{(i)}_k),
    \label{eq:planning-xdot-approx}
\end{equation}
where $\B{A}_k$ and $\B{B}_k$ are the partial derivative matrices of $\B{f}^{(i)}$ with respect to $\B{x}^{(i)}_k$ and $\B{u}^{(i)}_k$ evaluated at $\bar{\B{x}}^{(i)}_k, \bar{\B{u}}^{(i)}_k$.
Because we encode $\favhat^{(i)}$ using fully-differentiable DNNs, the partial derivatives can be efficiently computed analytically, e.g., by using \texttt{autograd} in PyTorch~\cite{pyTorch}.

Third, we linearize $\favhat^{(j)}$ around our prior states $\bar{\B{x}}^{(i)}_k$ for all neighboring robots $j\in\mathrm{neighbor}(i)$:
\begin{equation}
    \hat{\B{f}}_{a}^{(j)} \approx \B{C}^{(j)}_k (\B{x}^{(i)}_k - \bar{\B{x}}^{(i)}_k) + \favhat^{(j)}(\set^{(i)}_{\mathcal{I}}(\bar{\B{x}}^{(i)}_k)),
    \label{eq:planning-Fa-approx}
\end{equation}
where $\B{C}^{(j)}_k$ is the derivative matrix of $\favhat^{(j)}$ (the learned interaction function of robot $j$, represented by DNNs) with respect to $\B{x}^{(i)}_k$ evaluated at $\bar{\B{x}}^{(i)}_k$; and $\set^{(i)}_{\mathcal{I}}(\bar{\B{x}}^{(i)}_k)$ is the ordered sequence of relative states as defined in \eqref{eq:all-rel-states} but using the fixed prior state $\bar{\B{x}}^{(i)}_k$ rather than decision variable $\B{x}^{(i)}_k$ in \eqref{eq:rel-states}.

\changed{We now formulate a convex program, one per robot:}
\begin{align}
&\min_{\mathcal X^{(i)}_\mathrm{sol},\mathcal U^{(i)}_\mathrm{sol}} \sum_{t=0}^{T}\| \B{u}^{(i)}_k \|^2 + \lambda_1 \| \B{x}^{(i)}_{T} - \B{x}_f^{(i)} \|_{\infty} + \lambda_2 \delta \label{eq:scp}
\\
&\quad\text{subject to:} \nonumber \\
&\begin{cases}
\text{robot dynamics \eqref{eq:planning-discreteDynamics} and \eqref{eq:planning-xdot-approx}} &i\in[1,N] \\
\B{u}^{(i)}_k \in \mathcal{U}^{\mathcal{I}(i)} &i\in[1,N] \\
\B{x}^{(i)}_k \in \mathcal{X}^{\mathcal{I}(i)}_\delta &i\in[1,N],\delta \geq 0 \\
% (\bar{\B{p}}^{(ij)}_k)^\top (\B{p}^{(i)}_k - \bar{\B{p}}^{(i)}_k) \geq r^{(\mathcal{I}(i) \mathcal{I}(j))} \| \bar{\B{p}}^{(ij)}_k\|_2 &i<j,j\in[2,N]\\
\langle \bar{\B{p}}^{(ij)}_k, \B{p}^{(i)}_k - \bar{\B{p}}^{(i)}_k \rangle \geq r^{(\mathcal{I}(i) \mathcal{I}(j))} \| \bar{\B{p}}^{(ij)}_k\|_2 &i<j,j\in[2,N]\\
\B{x}^{(i)}_0 = \B{x}_s^{(i)} &i\in[1,N]\\
| \B{C}^{(j)}_k (\B{x}^{(i)}_k - \bar{\B{x}}^{(i)}_k) | \leq b_{fa} &i<j,j\in[2,N]\\
| \B{x}^{(i)}_k - \bar{\B{x}}^{(i)}_k | \leq \B{b}_{x}; \;
| \B{u}^{(i)}_k - \bar{\B{u}}^{(i)}_k | \leq \B{b}_{u} \; &i\in[1,N]
\end{cases} \nonumber
\end{align}
where $\mathcal{X}^{\mathcal{I}(i)}_\delta$ is the state space increased by $\delta$ in each direction, the linearized robot dynamics are similar to~\cite{foustOptimalGuidanceControl2020,nakka2020chance}, and the convexified inter-robot collision constraint is from~\cite{morgan2016swarm}.
We use soft constraints for reaching the goal (with weight $\lambda_1$) and the state space (with weight $\lambda_2$), and trust regions around $\bar{\B{x}}^{(i)}_k$, $\bar{\B{u}}^{(i)}_k$, and the neighbors' interaction forces for numerical stability.
Interaction forces are constrained in \eqref{eq:scp} because $\hat{\B{f}}_{a}^{(i)}$ is part of the state space $\mathcal{X}^{\mathcal{I}(i)}$.

We solve these convex programs sequentially and they converge to a locally optimal solution~\cite{morgan2016swarm}.
For the first iteration, we linearize around the trajectory computed during Stage 1 of our motion planner while subsequent iterations linearize around the solution of the previous iteration (\crefrange{alg:planning:stage2_start}{alg:planning:stage2_end} in \cref{alg:planning}).
It is possible to implement \cref{alg:planning} in a distributed fashion similar to prior work~\cite{morgan2016swarm}.

\section{Interaction-Aware Tracking Controller}
Given arbitrary desired trajectories, including ones that have not been computed using the method presented in \cref{sec:planning}, we augment existing nonlinear position and attitude controllers for multirotors~\cite{bandyopadhyay2016nonlinear,shi2018nonlinear} that account for interaction forces and torques and compensate motor delays.
\subsection{Tracking Control Law with Delay Compensation}
We use a typical hierarchical structure as shown in~\cref{fig:control-diagram} for controlling multirotor robots. Given the desired 3\changed{D} position trajectory $\B{p}_{d}^{(i)}(t)$ for robot $i$, we define a reference velocity
\begin{equation}
    \B{v}_{r}^{(i)} = \dot{\B{p}}_{d}^{(i)} - \bm{\Lambda}_p^{(i)} \tilde{\B{p}}^{(i)},\label{eq:v_ref}
\end{equation}
with position error $\tilde{\B{p}}^{(i)} = \B{p}^{(i)} - \B{p}_{d}^{(i)}$ and gain matrix $\bm{\Lambda}_p^{(i)} \succ 0$. The position controller is defined by the desired thrust vector
\begin{equation}
\begin{split}
        \B{f}_d^{(i)} &= -m^{(i)}\B{g} + m^{(i)}\dot{\B{v}}_r^{(i)} - \favhat^{(i)} \\
        & \qquad - \left(\B{K}_{v}^{(i)} + m \bm{\Gamma}_{v}^{(i)}\right)\tilde{\B{v}}^{(i)}  -\B{K}_{v}^{(i)}\bm{\Gamma}_{v}^{(i)}\int{\tilde{\B{v}}^{(i)}}, \label{eq:pos_ctrl}
\end{split}
\end{equation}
where $\tilde{\B{v}}^{(i)} = \B{v}^{(i)} -  \B{v}_{r}^{(i)}$ is the velocity error, and $\B{K}_{v}^{(i)}$, $\bm{\Gamma}_{v}^{(i)}$ are positive definite gain matrices. From~\cref{eq:pos_dynamics}, by setting $\B{R}^{(i)}\B{f}_u^{(i)} = \B{f}_d^{(i)}$, we compute the total desired thrust $T_d^{(i)} = \B{f}_d^{(i)} \cdot \hat{k}$ and the desired attitude $\B{R}_d^{(i)}$~\cite{shi2019neural-lander}. We convert the error rotation matrix $\tilde{\B{R}}^{(i)} = \B{R}_d^{(i)\top} \B{R}^{(i)}$ to a constrained quaternion error $\tilde{\B{q}}^{(i)} = [\tilde{q}_0^{(i)}, \tilde{\B{q}}_v^{(i)}]$, and define the reference angular rate as
\begin{equation}
    \bm{\omega}_r^{(i)} = \tilde{\B{R}}^{(i)\top}\bm{\omega}_d^{(i)} -\bm{\Lambda}_q^{(i)} \tilde{\B{q}}_v^{(i)}.\label{eq:omega_ref}
\end{equation}
We use the following nonlinear attitude controller from~\cite{bandyopadhyay2016nonlinear} with interaction torque compensation:
\begin{multline}
    \bm{\tau}_d^{(i)} = \B{J}^{(i)}\dot{\bm{\omega}}_r^{(i)} - \B{J}^{(i)}\bm{\omega}^{(i)} \times \bm{\omega}_r^{(i)} - \tauavhat^{(i)} \\
    - \B{K}_{\omega}^{(i)} \tilde{\bm{\omega}}^{(i)} - \bm{\Gamma}_{\omega}^{(i)} \int \tilde{\bm{\omega}}^{(i)}. \label{eq:att_ctrl}
\end{multline}
$\B{K}_{\omega}^{(i)}$ and $\bm{\Gamma}_{\omega}^{(i)}$ are positive definite gain matrices on angular rate error $\tilde{\bm{\omega}}^{(i)} = \bm{\omega}^{(i)} - \bm{\omega}_r^{(i)}$ and its integral, respectively.

From~\cref{eq:pos_ctrl,eq:att_ctrl}, the desired output wrench for the $i$-th robot $\bm{\eta}_d^{(i)} = \left[ T_d^{(i)}; \bm{\tau}_d^{(i)}\right]$ must be realized through a delayed motor signal $\B{u}_c^{(i)}$ from~\cref{eq:delay_model}. Here, we implement a simple yet effective method to compensate for motor delay~\cite{shi2020numerical}:
\begin{equation}
    \B{u}_c^{(i)} = \B{B}_0^{(i)+} \left(\bm{\eta}_d^{(i)} + \frac{\dot{\bm{\eta}}_d^{(i)}}{\lambda^{(i)}} \right),
    \label{eq:delay_ctrl}
\end{equation}
where the actuation matrix $\B{B}_0^{(i)}$ and delay constant $\lambda^{(i)}$ are determined a priori. We consider the multirotor to be fully or over-actuated, thus $\left(\cdot\right)^+$ denotes either the inverse or right pseudo-inverse. $\dot{\bm{\eta}}_d^{(i)}$ can be obtained through numerical differentiation~\cite{shi2020numerical}.

\begin{figure}
\includegraphics[width=\linewidth]{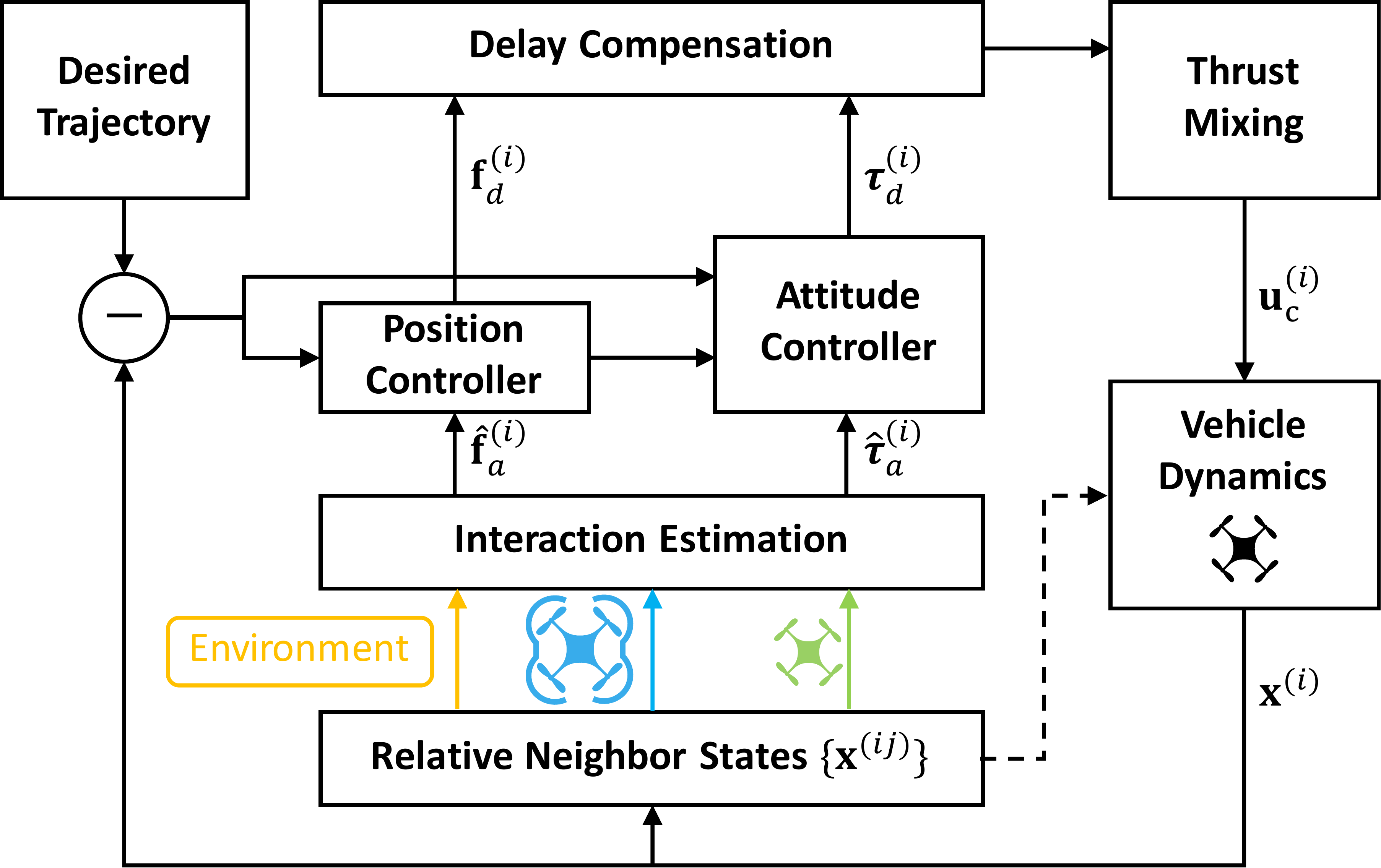}
\caption{Hierarchy of control and planning blocks with information flow for commands and sensor feedback. We use different colors to represent heterogeneous neighbors. Note that the neighbors will influence the vehicle dynamics (dashed arrow).}
\label{fig:control-diagram}
\end{figure}

\subsection{Analysis of Stability and Robustness}
\label{sec:stability}
The robust position and attitude controllers \cref{eq:pos_ctrl,eq:att_ctrl} can handle bounded model disturbance~\cite{bandyopadhyay2016nonlinear,shi2019neural-lander}. Here, we make the same assumption as in~\cite{shi2019neural-lander,shi2020neural}, that the learning errors and their rates of change are upper bounded.

\begin{assumption}[Bounded approximation error of DNNs]
\label{assump:bounded_error}
We denote the approximation errors between the learned model in \cref{eq:learningmodel-hetero} and the true unmodeled dynamics for interaction force and torque as $\bm{\epsilon}_f = \fav^{(i)} - \favhat^{(i)}$ and $\bm{\epsilon}_\tau = \tauav^{(i)} - \tauavhat^{(i)}$, respectively. For each robot, we assume $\bm{\epsilon}_f$ and $\bm{\epsilon}_\tau$ are uniformly upper bounded. Formally, $\sup_{\B{x}^{(i)} \in \mathcal{X}^{\mathcal{I}(i)}} \lVert \bm{\epsilon}_f \rVert = \bar{\epsilon}_{f}^{(i)} $ and $\sup_{\B{x}^{(i)} \in \mathcal{X}^{\mathcal{I}(i)}} \lVert \bm{\epsilon}_\tau \rVert = \bar{\epsilon}_{\tau}^{(i)}$. Furthermore, we assume the time derivative of errors are upper bounded as well, i.e $\sup_{\B{x}^{(i)} \in \mathcal{X}^{\mathcal{I}(i)}} \lVert \dot{\bm{\epsilon}}_f \rVert = \bar{d}_{f}^{(i)} $ and $\sup_{\B{x}^{(i)} \in \mathcal{X}^{\mathcal{I}(i)}} \lVert \dot{\bm{\epsilon}}_\tau \rVert = \bar{d}_{\tau}^{(i)}$.
\end{assumption}

\changed{Note that when the number of agents are fixed, the $\bar{\epsilon}_f^{(i)}$, $\bar{\epsilon}_\tau^{(i)}$, $\bar{d}_{f}^{(i)}$, and $\bar{d}_{\tau}^{(i)}$ can all be derived from the Lipschitz constant  in \cref{eq:learningmodel-hetero} for spectrally-normalized DNNs~\cite{bartlett2017spectrally,liu2020robust}, under standard training data distribution assumptions. It is common to assume such bounded approximation errors in learning-based control, e.g., \cite{shi2019neural-lander,taylor2019episodic,cheng2019control,mckinnon2019learn}.} Under Assumption \ref{assump:bounded_error}, we can show the stability of the position and attitude controllers using results from~\cite{shi2020neural,bandyopadhyay2016nonlinear}:

\begin{theorem}
For the position controller defined in~\cref{eq:v_ref,eq:pos_ctrl} under Assumption~\ref{assump:bounded_error}, the position tracking error $\lVert\tilde{\B{p}}^{(i)}\rVert$ converges exponentially to an error ball: 
\begin{equation}
    \lim_{t \to \infty}{\lVert\tilde{\B{p}}^{(i)}\rVert} = \frac{\bar{d}_{f}^{(i)}}{\lambda_{\min}(\bm{\Lambda}_p^{(i)})\lambda_{\min}(\bm{\Gamma}_v^{(i)})\lambda_{\min}(\B{K}_{v}^{(i)})}
    \label{eq:p1}
\end{equation}
\label{thm:p1}
\end{theorem}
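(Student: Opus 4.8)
The plan is to substitute the commanded thrust \cref{eq:pos_ctrl} into the translational dynamics \cref{eq:pos_dynamics} and reduce the closed loop to a cascade of first-order stable filters driven by the \emph{rate} of the learning error, following the composite-variable argument of \cite{shi2020neural,bandyopadhyay2016nonlinear}. First I would assume the attitude loop delivers $\B{R}^{(i)}\B{f}_u^{(i)} = \B{f}_d^{(i)}$ (exact thrust-direction tracking; any residual enters later as an extra bounded disturbance). The gravity and $m^{(i)}\dot{\B{v}}_r^{(i)}$ terms then cancel and, writing $\bm{\epsilon}_f = \fav^{(i)} - \favhat^{(i)}$, the error dynamics become
\[
m^{(i)}\dot{\tilde{\B{v}}}^{(i)} = \bm{\epsilon}_f - \bigl(\B{K}_{v}^{(i)} + m^{(i)}\bm{\Gamma}_{v}^{(i)}\bigr)\tilde{\B{v}}^{(i)} - \B{K}_{v}^{(i)}\bm{\Gamma}_{v}^{(i)}\int \tilde{\B{v}}^{(i)} .
\]
I would also record the two kinematic identities that come directly from \cref{eq:v_ref}: $\tilde{\B{v}}^{(i)} = \dot{\tilde{\B{p}}}^{(i)} + \bm{\Lambda}_p^{(i)}\tilde{\B{p}}^{(i)}$, which will supply the outermost filter at the end.

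Second, I would introduce the composite variable $\bm{\xi}^{(i)} = \tilde{\B{v}}^{(i)} + \bm{\Gamma}_{v}^{(i)}\int \tilde{\B{v}}^{(i)}$ and check that, because $m^{(i)}$ and $\bm{\Gamma}_{v}^{(i)}$ are constant and $\B{K}_v^{(i)}$ factors on the left, the error dynamics collapse to the clean first-order form $m^{(i)}\dot{\bm{\xi}}^{(i)} = -\B{K}_{v}^{(i)}\bm{\xi}^{(i)} + \bm{\epsilon}_f$. Differentiating once more gives $m^{(i)}\ddot{\bm{\xi}}^{(i)} = -\B{K}_{v}^{(i)}\dot{\bm{\xi}}^{(i)} + \dot{\bm{\epsilon}}_f$, i.e.\ $\dot{\bm{\xi}}^{(i)}$ satisfies a stable linear ODE forced by $\dot{\bm{\epsilon}}_f$, whose norm is bounded by $\bar{d}_{f}^{(i)}$ under Assumption~\ref{assump:bounded_error}. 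This extra differentiation is exactly why the ultimate bound in \cref{eq:p1} scales with the rate bound $\bar{d}_{f}^{(i)}$ rather than the magnitude bound $\bar{\epsilon}_{f}^{(i)}$: the integral action annihilates the (slowly varying) part of $\bm{\epsilon}_f$.

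Third, I would invoke a standard comparison/ISS lemma: for $\dot{\B{y}} = -\B{A}\B{y} + \B{d}(t)$ with $\B{A}\succ 0$ and $\limsup_t\lVert\B{d}(t)\rVert\le\bar d$, the Lyapunov function $V = \tfrac12\lVert\B{y}\rVert^2$ gives $\dot V \le -\lambda_{\min}(\B{A})\lVert\B{y}\rVert^2 + \lVert\B{y}\rVert\bar d$, so $\lVert\B{y}(t)\rVert$ converges exponentially to the ball of radius $\bar d/\lambda_{\min}(\B{A})$ (any positive mass prefactor cancels in this ratio). Applying it to $\dot{\bm{\xi}}^{(i)}$ yields ultimate bound $\bar{d}_{f}^{(i)}/\lambda_{\min}(\B{K}_{v}^{(i)})$; then, since $\dot{\bm{\xi}}^{(i)} = \dot{\tilde{\B{v}}}^{(i)} + \bm{\Gamma}_{v}^{(i)}\tilde{\B{v}}^{(i)}$, treating $\tilde{\B{v}}^{(i)}$ as the output of the filter $\dot{\tilde{\B{v}}}^{(i)} = -\bm{\Gamma}_{v}^{(i)}\tilde{\B{v}}^{(i)} + \dot{\bm{\xi}}^{(i)}$ gives $\bar{d}_{f}^{(i)}/\bigl(\lambda_{\min}(\B{K}_{v}^{(i)})\lambda_{\min}(\bm{\Gamma}_{v}^{(i)})\bigr)$; and finally, using $\tilde{\B{v}}^{(i)} = \dot{\tilde{\B{p}}}^{(i)} + \bm{\Lambda}_p^{(i)}\tilde{\B{p}}^{(i)}$, one more application to $\dot{\tilde{\B{p}}}^{(i)} = -\bm{\Lambda}_p^{(i)}\tilde{\B{p}}^{(i)} + \tilde{\B{v}}^{(i)}$ produces exactly the bound in \cref{eq:p1}.

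I expect the main obstacle to be the bookkeeping in chaining the three stable filters rigorously: after the first stage the forcing term is only \emph{asymptotically} (not uniformly from $t=0$) bounded, so the "$\lim$" in \cref{eq:p1} must be read as an ultimate/limsup bound and justified through the ISS property of each cascade stage rather than a single one-shot estimate. A secondary subtlety is the exact-attitude-tracking hypothesis $\B{R}^{(i)}\B{f}_u^{(i)} = \B{f}_d^{(i)}$: in the fully cascaded system the thrust-direction error $(\B{R}^{(i)}\B{f}_u^{(i)} - \B{f}_d^{(i)})$ adds a disturbance that must be shown to be dominated, which is handled by the companion attitude analysis (cf.\ \cite{bandyopadhyay2016nonlinear,shi2020neural}); here I would either take it as a standing hypothesis of \cref{thm:p1} or fold it into $\bm{\epsilon}_f$.
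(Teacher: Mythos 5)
Your proposal is correct and follows essentially the same route as the paper's proof: the paper defines the sliding variables $\B{s}_1 = \dot{\tilde{\B{p}}}^{(i)} + \bm{\Lambda}_p^{(i)}\tilde{\B{p}}^{(i)}$ (which equals your $\tilde{\B{v}}^{(i)}$) and $\B{s}_2 = m\dot{\B{s}}_1 + \B{K}_v\B{s}_1$, obtains $\dot{\B{s}}_2 + \bm{\Gamma}_v^{(i)}\B{s}_2 = \dot{\bm{\epsilon}}_f$, and chains the resulting hierarchy of first-order stable filters exactly as you do, with your $\dot{\bm{\xi}}^{(i)}$ playing the role of $\B{s}_2/m$. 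Your additional remarks on the limsup reading of the bound and the exact-attitude-tracking hypothesis are sensible caveats that the paper leaves implicit.
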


\begin{proof}
We select sliding variables: $\B{s}_1 = \dot{\tilde{\B{p}}}^{(i)} + \bm{\Lambda}_p^{(i)} \tilde{\B{p}}^{(i)}$ and $\B{s}_2 = m \dot{\B{s}}_1 + \B{K}_v \B{s}_1$. Then, \cref{eq:pos_ctrl} can be written as
\begin{equation*}
     \B{f}_d^{(i)} = -m^{(i)}\B{g} + m^{(i)}\dot{\B{v}}_r^{(i)} - \favhat^{(i)} - \B{K}_{v}^{(i)}\B{s}_1 -\bm{\Gamma}_v^{(i)}\int{\B{s}_2}.
\end{equation*}
Applying to~\cref{eq:pos_dynamics}, we can get closed-loop dynamics $\dot{\B{s}}_2 + \bm{\Gamma}_v^{(i)} \B{s}_2 = \dot{\bm{\epsilon}}_f$. Thus combining hierarchical linear systems of $\dot{\B{s}}_1$ and $\dot{\B{s}}_2$, we can easily arrive at~\cref{eq:p1}.
\end{proof}

\begin{theorem}
For the attitude controller defined in~\cref{eq:omega_ref,eq:att_ctrl} under Assumption~\ref{assump:bounded_error}, the attitude tracking error $\lVert\tilde{\B{q}}_v^{(i)}\rVert$ converges exponentially to an error ball determined by $\bar{d}_{\tau}^{(i)}$.
\label{thm:p2}
\end{theorem}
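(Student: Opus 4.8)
The plan is to follow the same cascade (hierarchical sliding / contraction) argument used for \cref{thm:p1}, with the translational variables replaced by their rotational counterparts. First I would introduce the composite angular‑velocity error $\B{s}_1=\tilde{\bm{\omega}}^{(i)}=\bm{\omega}^{(i)}-\bm{\omega}_r^{(i)}$ together with a second variable $\B{s}_2=\B{J}^{(i)}\dot{\B{s}}_1+\B{K}_\omega^{(i)}\B{s}_1$, the analogues of $\B{s}_1,\B{s}_2$ in the proof of \cref{thm:p1}. Substituting the attitude law \cref{eq:att_ctrl} into the rigid‑body equation in \cref{eq:att_dynamics} (with $\bm{\tau}_u^{(i)}=\bm{\tau}_d^{(i)}$), the inertial feedforward and gyroscopic terms combine into $\B{S}(\B{J}^{(i)}\bm{\omega}^{(i)})\tilde{\bm{\omega}}^{(i)}$, leaving the closed loop $\B{J}^{(i)}\dot{\tilde{\bm{\omega}}}^{(i)} = \B{S}(\B{J}^{(i)}\bm{\omega}^{(i)})\tilde{\bm{\omega}}^{(i)} + \bm{\epsilon}_\tau - \B{K}_\omega^{(i)}\tilde{\bm{\omega}}^{(i)} - \bm{\Gamma}_\omega^{(i)}\!\int\tilde{\bm{\omega}}^{(i)}$, where $\bm{\epsilon}_\tau=\tauav^{(i)}-\tauavhat^{(i)}$ is the residual torque bounded by \cref{assump:bounded_error}. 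The crucial observation is that $\B{S}(\B{J}^{(i)}\bm{\omega}^{(i)})$ is skew‑symmetric, so it drops out of any quadratic energy/contraction metric; differentiating the resulting identity once yields the rotational analogue of the relation $\dot{\B{s}}_2+\bm{\Gamma}_v^{(i)}\B{s}_2=\dot{\bm{\epsilon}}_f$ from the proof of \cref{thm:p1}, i.e. a stable first‑order system driven only by $\dot{\bm{\epsilon}}_\tau$, whose norm is bounded by $\bar{d}_\tau^{(i)}$.

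Second, I would close the kinematic loop. With the reference rate chosen as in \cref{eq:omega_ref}, the quaternion kinematics give $\dot{\tilde{\B{q}}}_v^{(i)} = \tfrac12\big(\tilde{q}_0^{(i)}\B{I}+\B{S}(\tilde{\B{q}}_v^{(i)})\big)\big(\tilde{\bm{\omega}}^{(i)}-\bm{\Lambda}_q^{(i)}\tilde{\B{q}}_v^{(i)}\big)$, a stable first‑order system in $\tilde{\B{q}}_v^{(i)}$ forced by $\tilde{\bm{\omega}}^{(i)}$. Cascading (i) this $\tilde{\B{q}}_v^{(i)}$‑subsystem, (ii) the $\B{s}_1$‑subsystem $\B{J}^{(i)}\dot{\B{s}}_1+\B{K}_\omega^{(i)}\B{s}_1=\B{s}_2$, and (iii) the $\B{s}_2$‑subsystem driven by $\dot{\bm{\epsilon}}_\tau$ produces a chain of three contracting blocks whose only exogenous input is $\dot{\bm{\epsilon}}_\tau$. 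A standard hierarchical input‑to‑state / contraction estimate then gives exponential convergence of $\lVert\tilde{\B{q}}_v^{(i)}\rVert$ to a residual ball of radius proportional to $\bar{d}_\tau^{(i)}$ and inversely proportional to the products of $\lambda_{\min}$ of the gain matrices $\B{K}_\omega^{(i)}$, $\bm{\Gamma}_\omega^{(i)}$, and $\bm{\Lambda}_q^{(i)}$, mirroring \cref{eq:p1}.

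The main obstacle, and the reason the statement only promises a ball ``determined by $\bar{d}_\tau^{(i)}$'' rather than a closed form as in \cref{eq:p1}, is that the attitude loop is genuinely nonlinear in two places: the state‑dependent kinematic gain $\tilde{q}_0^{(i)}\B{I}+\B{S}(\tilde{\B{q}}_v^{(i)})$, and the gyroscopic coupling $\B{S}(\B{J}^{(i)}\bm{\omega}^{(i)})\tilde{\bm{\omega}}^{(i)}$. The former must be handled by a local argument around $\tilde{q}_0^{(i)}\approx 1$ together with the bound $\lVert\tilde{\B{q}}_v^{(i)}\rVert\le 1$ (so the estimate holds within a region of attraction excluding the antipodal equilibrium), and the latter must be absorbed into the Lyapunov/contraction metric via $\B{v}^\top\B{S}(\cdot)\B{v}=0$ so that it affects neither stability nor the decay rate. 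I would discharge both points by invoking the nonlinear attitude‑tracking stability result of~\cite{bandyopadhyay2016nonlinear} for the inner $(\tilde{\B{q}}_v^{(i)},\tilde{\bm{\omega}}^{(i)})$ loop and then propagating the disturbance bound through the cascade exactly as in the proof of \cref{thm:p1}, using \cref{assump:bounded_error} to control $\dot{\bm{\epsilon}}_\tau$.
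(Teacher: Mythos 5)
Your proposal is correct and follows essentially the same route as the paper: substitute the attitude law \cref{eq:att_ctrl} into \cref{eq:att_dynamics} to obtain the closed-loop error dynamics in $\tilde{\bm{\omega}}^{(i)}$ forced by $\bm{\epsilon}_\tau$, then invoke the proof structure of Theorem~2 in~\cite{bandyopadhyay2016nonlinear} to propagate the bound through the $(\tilde{\bm{\omega}}^{(i)},\tilde{\B{q}}_v^{(i)})$ cascade, with the integral term explaining why the ultimate bound depends on $\bar{d}_\tau^{(i)}$ rather than $\bar{\epsilon}_\tau^{(i)}$. The paper's own proof is just these two steps stated tersely; you supply the cascade and skew-symmetry details it delegates to the reference.
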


\begin{proof}
Applying~\cref{eq:att_ctrl} to~\cref{eq:att_dynamics}, we get closed-loop dynamics
\begin{equation*}
    \B{J}^{(i)}\dot{\tilde{\bm{\omega}}}^{(i)} - \B{J}^{(i)}\bm{\omega}^{(i)} \times \tilde{\bm{\omega}}^{(i)}
    - \B{K}_{\omega}^{(i)}\tilde{\bm{\omega}}^{(i)} - \bm{\Gamma}_{\omega}^{(i)} \int \tilde{\bm{\omega}}^{(i)} = \bm{\epsilon}_\tau.
\end{equation*}
Following the proof structure for Theorem 2 in~\cite{bandyopadhyay2016nonlinear}, we can derive an ultimate bound for $\lVert\tilde{\B{q}}_v^{(i)}\rVert$ determined by $\bar{d}_{\tau}^{(i)}$.
\end{proof}

With motor delay, we can state the following result for stabilizing the output wrench error $\tilde{\bm{\eta}}^{(i)} = \bm{\eta}^{(i)} - \bm{\eta}^{(i)}_d$ with~\cref{eq:delay_ctrl}, assuming the motor delay constant is obtained from testing.

\begin{theorem}
For robot $i$, the controllers~\cref{eq:pos_ctrl,eq:att_ctrl,eq:delay_ctrl} will exponentially stabilize the augmented states of position, attitude and output wrench error: $[\tilde{\B{p}}^{(i)};\tilde{\B{v}}^{(i)};\tilde{\B{q}}_v^{(i)};\tilde{\bm{\omega}}^{(i)};\tilde{\bm{\eta}}^{(i)}]$.
\end{theorem}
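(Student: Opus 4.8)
The plan is to exploit the strictly lower-triangular (cascade) structure that \cref{eq:delay_ctrl} adds on top of the already-analyzed position and attitude loops. First I would compute the closed-loop dynamics of the wrench error $\tilde{\bm{\eta}}^{(i)}$. Since $\bm{\eta}^{(i)} = \B{B}_0^{(i)}\B{u}^{(i)}$ with $\B{B}_0^{(i)}\B{B}_0^{(i)+} = \B{I}$ (full/over-actuation), \cref{eq:delay_model} gives $\dot{\bm{\eta}}^{(i)} = -\lambda^{(i)}\bm{\eta}^{(i)} + \lambda^{(i)}\B{B}_0^{(i)}\B{u}_c^{(i)}$, and substituting \cref{eq:delay_ctrl} yields $\dot{\bm{\eta}}^{(i)} = -\lambda^{(i)}\bm{\eta}^{(i)} + \lambda^{(i)}\bm{\eta}_d^{(i)} + \dot{\bm{\eta}}_d^{(i)}$. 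Subtracting $\dot{\bm{\eta}}_d^{(i)}$ from both sides, the feedforward term cancels \emph{exactly}, leaving $\dot{\tilde{\bm{\eta}}}^{(i)} = -\lambda^{(i)}\tilde{\bm{\eta}}^{(i)}$. Hence $\tilde{\bm{\eta}}^{(i)}$ decays exponentially with rate $\lambda^{(i)}$ from any initial condition, and its dynamics are independent of the position and attitude states. (In practice $\dot{\bm{\eta}}_d^{(i)}$ is obtained by numerical differentiation, so a small residual can be absorbed into a disturbance; for the analysis we take it exact and bounded, which holds whenever the desired trajectory and its derivatives are bounded and the state remains in a compact set.)

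Second, I would revisit the proofs of \cref{thm:p1,thm:p2} without the idealization $\bm{\eta}^{(i)} = \bm{\eta}_d^{(i)}$, instead writing $T^{(i)} = T_d^{(i)} + \tilde{T}^{(i)}$ and $\bm{\tau}_u^{(i)} = \bm{\tau}_d^{(i)} + \tilde{\bm{\tau}}^{(i)}$, where $\tilde{T}^{(i)}$ and $\tilde{\bm{\tau}}^{(i)}$ are components of $\tilde{\bm{\eta}}^{(i)}$. The thrust error enters \cref{eq:pos_dynamics} additively through $\B{R}^{(i)}\hat{k}$ and the torque error enters \cref{eq:att_dynamics} additively, both bounded by $\|\tilde{\bm{\eta}}^{(i)}\|$. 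Repeating the sliding-variable manipulation in the proof of \cref{thm:p1} then gives the same hierarchy of linear error systems $\dot{\B{s}}_1^{(i)}$, $\dot{\B{s}}_2^{(i)}$, now forced by $\dot{\bm{\epsilon}}_f^{(i)}$ \emph{and} by a term linear in $\tilde{\bm{\eta}}^{(i)}$ (and its derivative); that is, the translational loop is exponentially input-to-state stable (ISS) with respect to $(\dot{\bm{\epsilon}}_f^{(i)}, \tilde{\bm{\eta}}^{(i)})$ and, as in standard geometric control, with respect to the attitude tracking error. The same reasoning applied to the attitude closed loop (following the proof of \cref{thm:p2} and Theorem~2 of~\cite{bandyopadhyay2016nonlinear}) shows it is exponentially ISS with respect to $(\bm{\epsilon}_\tau^{(i)}, \tilde{\bm{\eta}}^{(i)})$.

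Finally I would assemble the three subsystems. The interconnection is strictly lower-triangular: $\tilde{\bm{\eta}}^{(i)}$ drives both the attitude and the position errors; the attitude error additionally drives the position error; nothing propagates upward. Since the $\tilde{\bm{\eta}}^{(i)}$-subsystem is globally exponentially stable and the attitude and position subsystems are exponentially ISS in the signals coming from the ``faster'' subsystems (plus the bounded learning-error disturbances $\bm{\epsilon}_\tau^{(i)}$, $\dot{\bm{\epsilon}}_f^{(i)}$), the standard cascade result for exponentially ISS systems — equivalently the hierarchical combination of contracting systems used implicitly in the proof of \cref{thm:p1} — implies that the augmented state $[\tilde{\B{p}}^{(i)};\tilde{\B{v}}^{(i)};\tilde{\B{q}}_v^{(i)};\tilde{\bm{\omega}}^{(i)};\tilde{\bm{\eta}}^{(i)}]$ converges exponentially to a ball whose radius is set by $\bar{d}_f^{(i)}$ and $\bar{d}_\tau^{(i)}$ (and collapses to the origin when the learning errors vanish), which is the claimed statement.

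The step I expect to be the main obstacle is upgrading \cref{thm:p1,thm:p2} from the stated ``convergence to an error ball'' to genuine exponential ISS with respect to the newly introduced wrench-error input, and checking that this input enters through channels that preserve the ISS estimate — in particular that $\|\B{f}_d^{(i)}\|$ stays bounded away from zero so that $\B{R}_d^{(i)}$ and the projection defining $T_d^{(i)}$ remain well-defined and Lipschitz under the perturbation, and that $\dot{\bm{\eta}}_d^{(i)}$ is well-defined and bounded (which is where boundedness of the trajectory derivatives and forward invariance of a compact state set are used). Once those regularity facts are in hand, the cascade argument is routine.
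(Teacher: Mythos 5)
Your proposal is correct and follows essentially the same route as the paper: the paper's proof likewise combines the exponential convergence of $[\tilde{\B{p}}^{(i)};\tilde{\B{v}}^{(i)};\tilde{\B{q}}_v^{(i)};\tilde{\bm{\omega}}^{(i)}]$ from \cref{thm:p1,thm:p2} with the wrench-error subsystem introduced by \cref{eq:delay_ctrl}, except that it delegates the cascade step to Theorem~3.1 of~\cite{shi2020numerical} rather than deriving $\dot{\tilde{\bm{\eta}}}^{(i)}=-\lambda^{(i)}\tilde{\bm{\eta}}^{(i)}$ and the ISS interconnection explicitly as you do. Your version is simply a self-contained unpacking of that citation, with the same hierarchical structure and the same (correct) reading of ``exponentially stabilize'' as exponential convergence to an error ball set by the learning-error bounds.
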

\begin{proof}
\Cref{eq:pos_ctrl,eq:att_ctrl} stabilizes $[\tilde{\B{p}}^{(i)};\tilde{\B{v}}^{(i)};\tilde{\B{q}}_v^{(i)};\tilde{\bm{\omega}}^{(i)}]$ exponentially from~\cref{thm:p1,thm:p2}. Thus by Theorem 3.1 from~\cite{shi2020numerical}, it follows that the augmented states is also exponentially stabilized by~\cref{eq:delay_ctrl}.
\end{proof}

With small modelling errors on $\lambda^{(i)}$, the controller~\cref{eq:delay_ctrl} can robustly cancel out some effects from delays and improve tracking performance in practice. Furthermore, it can handle not only first-order motor delay~\cref{eq:delay_model}, but also signal transport delays~\cite{shi2020numerical}. In case of the small quadrotors used in our experiments, such delays are on the same order of magnitude as the motor delay, thus making~\cref{eq:delay_ctrl} essential for improving the control performance.

\section{Experiments}
We use quadrotors based on Bitcraze Crazyflie 2.0/2.1 (CF). Our small quadrotors are Crazyflie 2.X, which are small (\SI{9}{cm} rotor-to-rotor) and lightweight (\SI{34}{g}) products that are commercially available.
Our large quadrotors use the Crazyflie 2.1 as control board on a larger frame with brushed motors (model: Parrot Mini Drone), see \cref{tab:cfprops} for a summary of physical parameters.
We use the Crazyswarm~\cite{crazyswarm} package to control multiple Crazyflies simultaneously.
Each quadrotor is equipped with a single reflective marker for position tracking at \SI{100}{Hz} using a motion capture system.
The nonlinear controller, extended Kalman filter, and neural network evaluation are running on-board the STM32 microcontroller.

For the controller, we implement the delay compensation \eqref{eq:delay_ctrl} in the following way: i) we numerically estimate $\dot{T}_d^{(i)}$ as part of the position controller \eqref{eq:pos_ctrl}, and ii) we approximate $\dot{\bm{\tau}}_d^{(i)}$ by adding the additional term $-\B{K}_{\dot{\omega}}^{(i)}\dot{\tilde{\bm{\omega}}}^{(i)}$ to the attitude controller \eqref{eq:att_ctrl}, where $\dot{\tilde{\bm{\omega}}}^{(i)}$ is numerically estimated and $\B{K}_{\dot{\omega}}^{(i)}$ is a positive definite gain matrix. We found that the other terms of $\dot{\bm{\tau}}_d^{(i)}$, i.e. the time-derivative of \eqref{eq:att_ctrl}, are negligible for our use-case.
\changed{The baseline controller is identical (including the chosen gains) to our proposed controller except that the interaction force for the baseline is set to zero.
The baseline controller is much more robust and efficient than the well-tuned nonlinear controller in the Crazyswarm package, which cannot safely execute the close-proximity flight shown in \cref{fig:fig1}(c) and requires at least \SI{60}{cm} safety distance~\cite{DBLP:journals/trob/HonigPKSA18}.}

For data collection, we use the $\mu$SD card extension board and store binary encoded data roughly every \SI{10}{ms}.
Each dataset is timestamped using the on-board microsecond timer and the clocks are synchronized before takeoff using broadcast radio packets.
The drift of the clocks of different Crazyflies can be ignored for our short flight times (less than \SI{2}{min}).

\begin{table}
\caption{System Identification of the used quadrotors.}
\label{tab:cfprops}
\centering
\begin{tabular}{C{1.2cm}||C{3cm}|C{3cm}}
& Small & Large \\
& \includegraphics[width=1.0\linewidth]{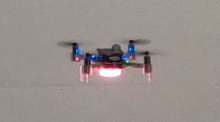} & \includegraphics[width=1.0\linewidth]{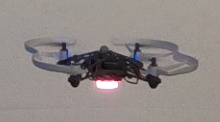}
\\
\hline
\hline
Weight & \SI{34}{g} & \SI{67}{g}\\
% \hline
Max Thrust & \SI{65}{g} & \SI{145}{g}\\
% \hline
Diameter & \SI{12}{cm} & \SI{19}{cm} \\
% \hline
$\lambda$ & 16 & 16\\
% \hline
$\psi_1(\hat{p},\hat{v})$ & $11.09 -39.08\hat{p}-9.53\hat{v}+20.57\hat{p}^2+38.43\hat{p}\hat{v}$ & $44.1.0 -122.51\hat{p}-36.18\hat{v}+53.11\hat{p}^2+107.68\hat{p}\hat{v}$ \\
% \hline
$\psi_2(\hat{f}, \hat{v})$ & $0.5 +0.12\hat{f}-0.41\hat{v}-0.002\hat{f}^2-0.043\hat{f}\hat{v}$ & $0.56 +0.06\hat{f}-0.6\hat{v}-0.0007\hat{f}^2-0.015\hat{f}\hat{v}$ \\
% \hline
$\psi_3(\hat{p},\hat{v})$ & $-9.86 +3.02\hat{p}-26.72\hat{v}$ & $-29.91 +8.1\hat{p}+65.2\hat{v}$
\end{tabular}
\end{table}

\subsection{Calibration and System Identification of Different Robots}
\label{sec:systemid}

Prior to learning the residual terms $\fav^{(i)}$ and $\tauav^{(i)}$, we first calibrate the nominal dynamics model $\fnom^{(i)}(\B{x},\B{u})$.
We found that existing motor thrust models \cite{BitcrazeThrust, systemid} are not very accurate, because they only consider a single motor and ignore the effect of the battery state of charge.
We calibrate each Crazyflie by mounting the whole quadrotor upside-down on a load cell (model TAL221) which is directly connected to the Crazyflie via a custom extension board using a 24-bit ADC (model HX711).
The upside-down mounting avoids contamination of our measurements with downwash-related forces.
We use a \SI{100}{g} capacity load cell for the small quadrotor and a \SI{500}{g} capacity load cell for the large quadrotor.
We randomly generate desired PWM motor signals (identical for all 4 motors) and collect the current battery voltage, PWM signals, and measured force.
We use this data to find three polynomial functions: $\psi_1$, $\psi_2$, and $\psi_3$.
The first $\hat{f} = \psi_1(\hat{p},\hat{v})$ computes the force of a single rotor given the normalized PWM signal $\hat{p}$ and the normalized battery voltage $\hat{v}$.
This function is only required for the data collection preparation in order to compute $\fav^{(i)}$.
The second $\hat{p} = \psi_2(\hat{f}, \hat{v})$ computes the required PWM signal $\hat{p}$ given the desired force $\hat{f}$ and current battery voltage $\hat{v}$.
Finally, $\hat{f}_\mathrm{max} = \psi_3(\hat{p},\hat{v})$ computes the maximum achievable force $\hat{f}_\mathrm{max}$, given the current PWM signal $\hat{p}$ and battery voltage $\hat{v}$.
The last two functions are important at runtime for outputting the correct force as well as for thrust mixing when motors are saturated~\cite{DBLP:journals/ral/FaesslerFS17}.

We use the same measurement setup with the load cell to establish the delay model of $T_d^{(i)}$ with a square wave PWM signal.
While the delay model is slightly asymmetric in practice, we found that our symmetric model \cref{eq:delay_model} is a good approximation.
All results are summarized in \cref{tab:cfprops}.
We use the remaining parameters ($\B{J}$, thrust-to-torque ratio) from the existing literature~\cite{systemid}.

\subsection{Data Collection}
\label{exp:dataCollection}

\begin{table*}
\caption{12 scenarios for data collection.}
\label{tab:data_collection}
\centering
\begin{tabular}{c|C{3.75cm}C{3.75cm}C{3.75cm}C{3.75cm}}
Scenario & S & S$\,\rightarrow\,$S & L$\,\rightarrow$\,S & \{S,\,S\}$\,\rightarrow\,$S \\ \hline
Model & $\bm{\rho}_{\sm}(\bm{\phi}_{\env})$ & $\bm{\rho}_{\sm}(\bm{\phi}_{\env}+\bm{\phi}_{\sm})$ & $\bm{\rho}_{\sm}(\bm{\phi}_{\env}+\bm{\phi}_{\la})$ &
$\bm{\rho}_{\sm}(\bm{\phi}_{\env}+\bm{\phi}_{\sm}+\bm{\phi}_{\sm})$
\end{tabular}\\[2mm]
\begin{tabular}{c|C{3.75cm}C{3.75cm}C{3.75cm}C{3.75cm}}
Scenario & \{S,\,L\}$\,\rightarrow\,$S & \{L,\,L\}$\,\rightarrow\,$S & L & S$\,\rightarrow\,$L \\ \hline
Model & $\bm{\rho}_{\sm}(\bm{\phi}_{\env}+\bm{\phi}_{\sm}+\bm{\phi}_\la)$ &
$\bm{\rho}_{\sm}(\bm{\phi}_\env+\bm{\phi}_\la+\bm{\phi}_\la)$ &
$\bm{\rho}_{\la}(\bm{\phi}_\env)$ &
$\bm{\rho}_{\la}(\bm{\phi}_\env+\bm{\phi}_\sm)$
\end{tabular}\\[2mm]
\begin{tabular}{c|C{3.75cm}C{3.75cm}C{3.75cm}C{3.75cm}}
Scenario & L$\,\rightarrow\,$L & \{S,\,S\}$\,\rightarrow\,$L & \{S,\,L\}$\,\rightarrow\,$L & \{L,\,L\}$\,\rightarrow\,$S \\ \hline
Model & $\bm{\rho}_{\la}(\bm{\phi}_\env+\bm{\phi}_\la)$ &
$\bm{\rho}_{\la}(\bm{\phi}_\env+\bm{\phi}_\sm+\bm{\phi}_\sm)$ &
$\bm{\rho}_{\la}(\bm{\phi}_\env+\bm{\phi}_\sm+\bm{\phi}_\la)$ &
$\bm{\rho}_{\la}(\bm{\phi}_\env+\bm{\phi}_\la+\bm{\phi}_\la)$
\end{tabular}
\end{table*}

Recall that in \cref{eq:learningmodel-hetero}, we need to learn $2K$ neural networks \changed{for} $K$ types of robots. In our experiments, we consider two types of quadrotors (small and large) and also the environment (mainly ground effect and air drag), as shown in Example~\ref{example:hetero-system}. Therefore, we have 5 neural networks to be learned:
\begin{equation}
\label{eq:five-nns}
\bm{\rho}_{\sm},\,\bm{\rho}_{\la},\,\bm{\phi}_{\sm},\,\bm{\phi}_{\la},\,\bm{\phi}_{\env},
\end{equation}
where we do not have $\bm{\rho}_{\env}$ because the aerodynamical force acting on the environment is not interesting for our purpose. 
To learn these 5 neural networks, we fly the heterogeneous swarm in 12 different scenarios (see \cref{tab:data_collection}) to collect labeled $\fav^{(i)}$ and $\tauav^{(i)}$ data for each robot. For instance, Example~\ref{example:hetero-system} (as depicted in \cref{fig:fig1}(a)) corresponds to the ``\{S,\,S\}$\,\rightarrow\,$L'' scenario in \cref{tab:data_collection}, where the large robot has two small robots and the environment as its neighbors. 

We utilize two types of data collection tasks: random walk and swapping.
For random walk, we implement a simple reactive collision avoidance approach based on artificial potentials on-board each Crazyflie~\cite{artificialPotentials}.
The host computer randomly selects new goal points within a cube for each vehicle at a fixed frequency. These goal points are used as an attractive force, while neighboring drones contribute a repulsive force.
For swapping, the drones are placed in different horizontal planes on a cylinder and tasked to move to the opposite side. All the drones are vertically aligned for one time instance, causing a large interaction force. 
%see \cref{fig:intro,fig:swaps,fig:validation} for examples with two, three, and four vehicles.
The random walk data helps us to explore the whole space quickly, while the swapping data ensures that we have data for a specific task of interest. Note that for both random walk and swapping, the drones also move close to the ground, to collect sufficient data for learning the ground effect. \changed{The collected data covers drone flying speeds from 0 to \SI{2}{m/s}, where $7\%$ are with relatively high speeds ($\geq$\SI{0.5}{m/s}) to learn the aerodynamic drag.} For both task types, we varied the scenarios listed in Table~\ref{tab:data_collection}.

To learn the 5 DNNs in~\cref{eq:five-nns}, for each robot $i$ in each scenario, we collect the timestamped states $\B{x}^{(i)}=[\B{p}^{(i)};\B{v}^{(i)};\B{R}^{(i)};\bm{\omega}^{(i)}]$.
We then compute $\B{y}^{(i)}$ as the observed value of $\fav^{(i)}$ and $\tauav^{(i)}$. We compute $\fav^{(i)}$ and $\tauav^{(i)}$  using \cref{eq:hetero-onerobot}, where the nominal dynamics $\fnom^{(i)}$ is calculated based on our system identification in \cref{sec:systemid}. \changed{With $\fnom^{(i)}$, $\B{y}^{(i)}$ is computed by $\dot{\B{x}}^{(i)}-\fnom^{(i)}$, where $\dot{\B{x}}^{(i)}$ is estimated by the five-point numerical differentiation method.} Note that the control delay \changed{$\lambda^{(i)}$} is also considered when we compute $\fav^{(i)}$ and $\tauav^{(i)}$.
Our training data consists of sequences of $\left(\{\set_{\mathrm{type}_1}^{(i)},\cdots,\set_{\mathrm{type}_K}^{(i)}\},\B{y}^{(i)}\right)$ pairs, where $\set^{(i)}_{\mathrm{type}_k} = \{\B{x}^{(ij)}|j\in\mathrm{neighbor}(i)\text{ and }\mathcal{I}(j)=\mathrm{type}_k\}$ is the set of the relative states of the type-$k$ neighbors of $i$. We have the following loss function for robot $i$ in each scenario (see \cref{tab:data_collection} for the detailed model structure in each scenario):  
\begin{equation}
\Bigg\| \bm{\rho}_{\mathcal{I}(i)}\Bigl(\sum_{k=1}^K\sum_{\B{x}^{(ij)}\in \set_{\mathrm{type}_k}^{(i)}}\bm{\phi}_{\mathcal{I}(j)}(\B{x}^{(ij)})\Bigr) - \B{y}^{(i)} \Bigg\|_2^2,
\end{equation}
and we stack all the robots' data in all scenarios and train on them together. \changed{There are 1.4 million pairs in the full dataset.}

\changed{In practice, we found the unmodeled torque $\|\tauav^{(i)}\|$ is very small (smaller by two orders of magnitude than the feedback term $\B{K}_{\omega}^{(i)}\tilde{\bm{\omega}}^{(i)}$ in the attitude controller \cref{eq:att_ctrl}), so we only learn $\fav^{(i)}$.} 
We compute the relative states from our collected data as $\B{x}^{(ij)}=[\B{p}^{(j)}-\B{p}^{(i)};\B{v}^{(j)}-\B{v}^{(i)}]\in\mathbb{R}^6$ (i.e., relative position and relative velocity both in the world frame), since the attitude information $\B{R}$ and $\bm{\omega}$ are not dominant for $\fav^{(i)}$. If the type of neighbor $j$ is ``environment'', we set $\B{p}^{(j)}=\B{0}$ and $\B{v}^{(j)}=\B{0}$. \changed{In this work, we only learn the $z$-component of $\fav^{(i)}$ since we found the other two components, $x$ and $y$, are much smaller and less varied, and do not significantly alter the nominal dynamics. In data collection, the rooted means and standard deviations of the squared values of $f_{a,x},f_{a,y}$, and $f_{a,z}$ are $1.6\pm2.5,1.2\pm2.2,5.0\pm8.9$ grams, respectively (for reference, the weights of the small and large drones are \SI{34}{g} and \SI{67}{g}).} Therefore, the output of our learning model in \cref{eq:learningmodel-hetero} is a scalar to approximate the $z$\changed{-}component of the unmodeled force function $\fav^{(i)}$.

\subsection{Learning Results and Ablation Analysis}
\label{sec:ablation}

\begin{figure*}
\centering
\includegraphics[width=\linewidth]{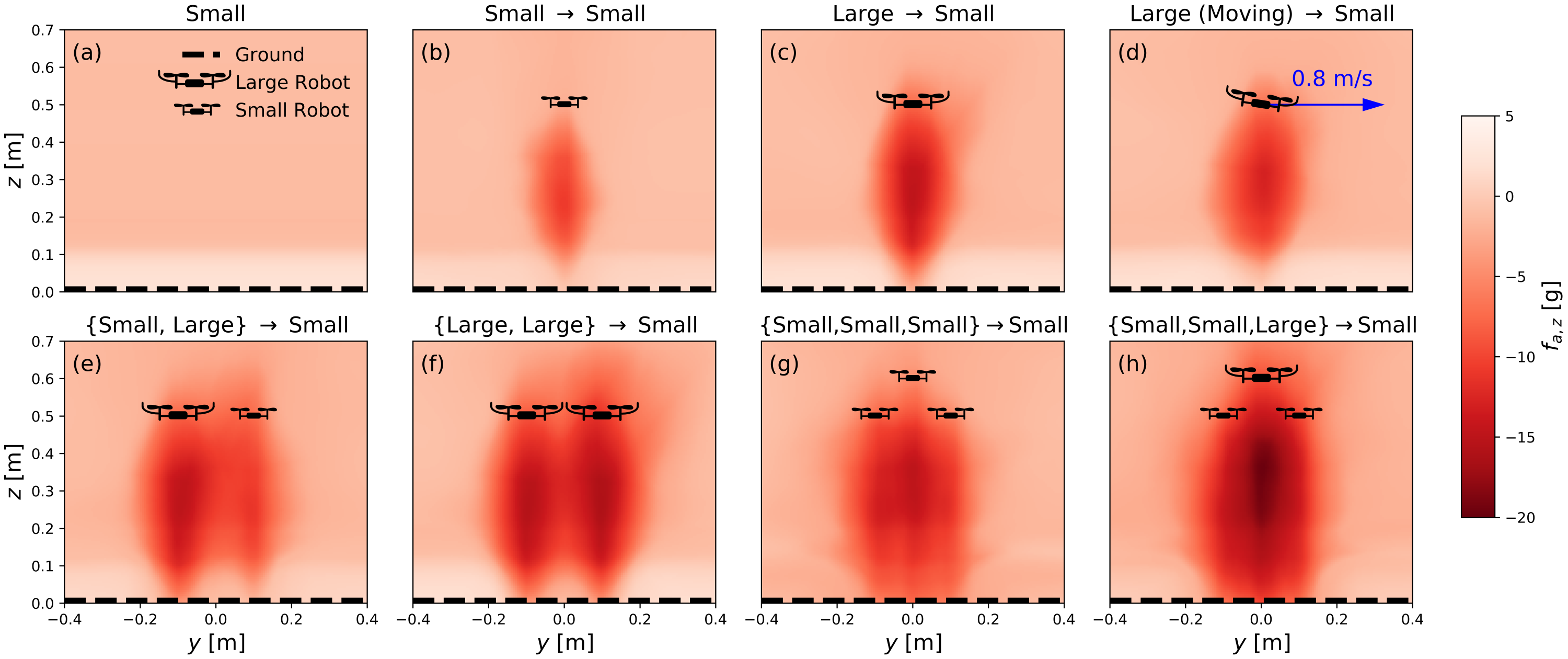}
\caption{\changed{$f_{a,z}$ prediction from the trained $\{\bm{\rho}_{\sm},\bm{\rho}_{\la},\bm{\phi}_{\sm},\bm{\phi}_{\la},\bm{\phi}_{\env}\}$ networks. Each heatmap gives the prediction of $f_{a,z}$ of a vehicle in different horizontal and vertical (global) positions. The (global) position of neighboring drones are represented by drone icons. See \cref{sec:ablation} for details.}}
\label{fig:vis}
\end{figure*}

Each scenario uses a trajectory with a duration around 1000 seconds. For each scenario, we equally split the total trajectory into 50 shorter pieces, where each one is about 20 seconds. Then we randomly choose 80\% of these 50 trajectories for training and 20\% for validation. 

Our DNN functions of $\bm{\phi}$ ($\bm{\phi}_{\sm},\bm{\phi}_{\la},\bm{\phi}_{\env}$) have four layers with architecture $6\rightarrow25\rightarrow40\rightarrow40\rightarrow H$, and our $\bm{\rho}$ DNNs ($\bm{\rho}_{\sm},\bm{\rho}_{\la}$) also have $L=4$ layers, with architecture $H\rightarrow40\rightarrow40\rightarrow40\rightarrow1$. We use the ReLU function as the activation operator, and we use PyTorch~\cite{pyTorch} for training and implementation of spectral normalization (see \cref{sec:spectral}) of all these five DNNs. During training we iterate all the data 20 times for error convergence. 

Note that $H$ is the dimension of the hidden state. To study the effect of $H$ on learning performance, we use three different values of $H$ and the mean validation errors for each $H$ are shown in \cref{tab:ablation}. Meanwhile, we also study the influence of the number of layers by fixing $H=20$ and changing $L$, which is the number of layers of all $\bm{\rho}$ nets and $\bm{\phi}$ nets. For $L=3$ or $L=5$, we delete or add a $40\rightarrow 40$ layer for all $\bm{\rho}$ nets and $\bm{\phi}$ nets, before their last layers. We repeat all experiments three times to get mean and standard deviation. As depicted in \cref{tab:ablation}, we found that the average learning performance (mean validation error) is not sensitive to $H$, but larger $H$ results in higher variance, \changed{possibly because using a bigger hidden space (larger $H$) leads to a more flexible encoding that is harder to train reliably.} In terms of the number of layers, four layers are significantly better than five (which tends to overfit data), and slighter better than three. To optimize performance, we finally choose $H=20$ and use four-layer neural networks, which can be efficiently evaluated on-board. \changed{We notice that $H$ and $L$ are the most important parameters, and the learning performance is not sensitive to other parameters such as the number of weights in intermediate layers.}

\Cref{fig:vis} depicts the prediction of $f_{a,z}$, trained with flight data from the 12 scenarios listed in \Cref{tab:data_collection}.
The color encodes the magnitude of $\hat{f}_{a,z}$ for a single small multirotor positioned at different global $(y,z)$ coordinates. 
The big/small black drone icons indicate the (global) coordinates of neighboring big/small multirotors, and the dashed line located at $z=0$ represents the ground. All quadrotors are in the same $x$-plane.
For example, in \cref{fig:vis}(e), one large quadrotor is hovering at $(y=-0.1,z=0.5)$ and one small quadrotor is hovering at $(y=0.1,z=0.5)$.
If we place a third small quadrotor at $(y=0,z=0.3)$, it would estimate $\hat{f}_{a,z}=\SI{-10}{g}$ as indicated by the red color in that part of the heatmap.
Similarly, in \cref{fig:vis}(a) the small multirotor only has the environment as a special neighbor. If the small multirotor is hovering at $(y=0,z=0.05)$, it would estimate $\hat{f}_{a,z}=\SI{5}{g}$, which is mainly from the ground effect.
All quadrotors are assumed to be stationary except for \cref{fig:vis}(d), where the one neighbor is moving at \SI{0.8}{m/s}.

We observe that the interaction between quadrotors is non-stationary and sensitive to relative velocity. In \cref{fig:vis}(d), the vehicle's neighbor is moving, and the prediction becomes significantly different from \cref{fig:vis}(c), where the neighbor is just hovering. \changed{To further understand the importance of relative velocity, we retrain neural networks neglecting relative velocity and the mean squared validation error degrades by $18\%$, from $6.42$ to $7.60$.}
We can also observe that the interactions are not a simple superposition of different pairs. \changed{For instance, \cref{fig:vis}(g) is significantly more complex than a simple superposition of \cref{fig:vis}(a) plus three (b), i.e., $\bm{\rho}_{\sm}(\bm{\phi}_{\env})+\bm{\rho}_{\sm}(\bm{\phi}_{\sm})+\bm{\rho}_{\sm}(\bm{\phi}_{\sm})+\bm{\rho}_{\sm}(\bm{\phi}_{\sm})$. The maximum gap between \cref{fig:vis}(g) and the superposition version is \SI{11.4}{g}.}
Moreover, we find that the ground effect and the downwash effect from a neighboring multirotor interact in an intriguing way. For instance, in \cref{fig:vis}(b), the downwash effect is ``mitigated'' as the vehicle gets closer to the ground.
Finally, we observe that the large quadrotors cause significantly higher interaction forces than the small ones (see \cref{fig:vis}(e)), which further emphasizes the importance of our heterogeneous modeling.

Note that in training we only have data from 1-3 vehicles (see \Cref{tab:data_collection}). Our approach can generalize well to a larger swarm system. In \cref{fig:vis}, predictions for a 4-vehicle team (as shown in \cref{fig:vis}(g,h)) are still reliable. Moreover, our models work well in real flight tests with 5 vehicles (see \cref{fig:plot5}) and even 16 vehicles (see \cref{fig:fig1}).

\begin{table}
\caption{Ablation analysis. Top: $L=4$ and $H$ varies. Bottom: $H=20$ and $L$ varies. The error is the mean squared error (MSE) between $f_{a,z}$ prediction and the ground truth.}
\label{tab:ablation}
\centering
\begin{tabular}{c|ccc}
$H$ & 10 & 20 & 40 \\ \hline
Validation Error & 6.70$\pm$0.05 & 6.42$\pm$0.18 & 6.63$\pm$0.35
\end{tabular}\\[2mm]
\begin{tabular}{c|ccc}
$L$ & 3 & 4 & 5 \\ \hline
Validation Error & 6.52$\pm$0.17 & 6.42$\pm$0.18 & 7.21$\pm$0.28 
\end{tabular}
\end{table}

\subsection{Motion Planning with Aerodynamics Coupling}
We implement \cref{alg:planning} in Python using PyTorch 1.5~\cite{pyTorch} for automatic gradient computation, CVXPY 1.0~\cite{diamond2016cvxpy} for convex optimization, and GUROBI 9.0~\cite{gurobi} as underlying solver.
To simulate the tracking performance of the planned trajectories, we also implement a \changed{nonlinear} controller, which uses the planned controls as feed-forward term.
We compare trajectories that were planned with a learned model of $f_{a,z}$ with trajectories without such a model (i.e., $f_{a,z}=0$) using \cref{alg:planning} with identical parameters.
At test time, we track the planned trajectories with our controller, and forward propagate the dynamics with our lea\changed{r}ned model of $f_{a,z}$.

\begin{figure*}
\includegraphics[width=\linewidth]{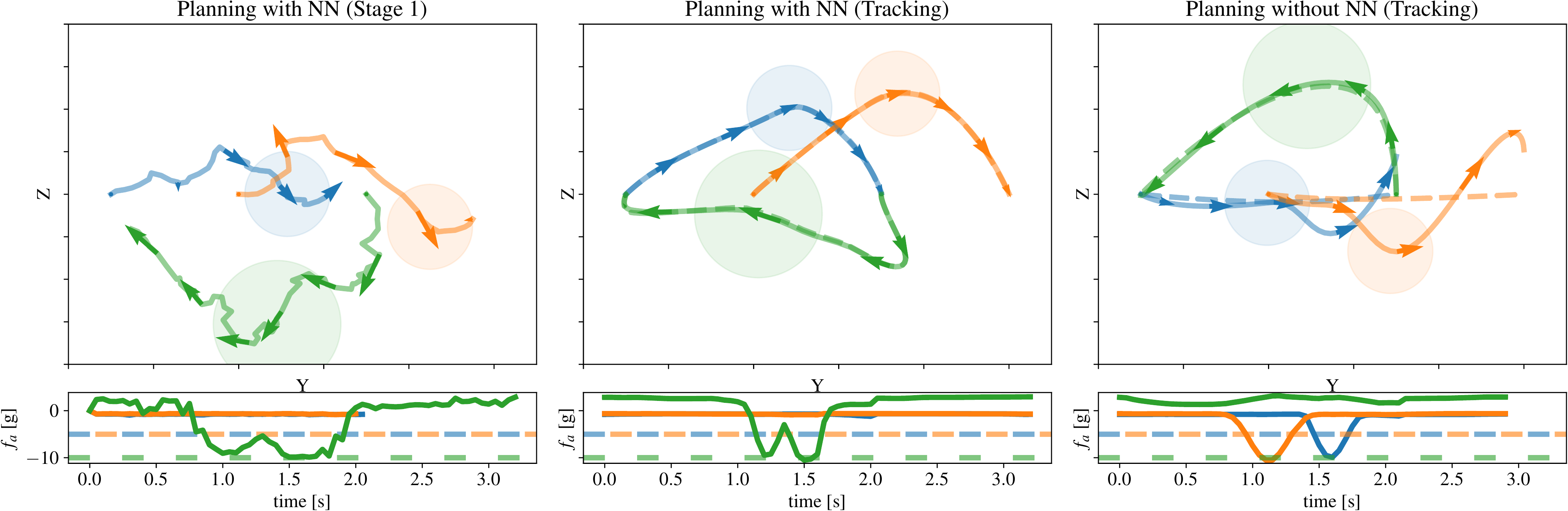}
\caption{Example motion planning result for a three-robot swapping task in 2D (blue and orange: small robots; green: large robot). \changed{Top row: $yz$-state space plot, where the arrows indicate the velocities every second, and the circles show the robot collision boundary shape at the middle of the task.} Bottom row: interaction force for each robot over time (dashed: desired limit per robot). 
Left: Sampling-based motion planning with neural network to compute trajectories where the large robots moves below the small robots.
Middle: Refined trajectories using SCP (dashed) and tracked trajectories (solid).
Right: Planned trajectories when ignoring interaction forces (dashed) and tracked trajectories (solid). In this case, a dangerous configuration is chosen where the large robot flies on top of the small robots, exceeding their disturbance limits of \SI{5}{g}.}
\label{fig:plot2}
\end{figure*}

We visualize an example in \cref{fig:plot2}, where two small and one large robots are tasked with exchanging positions.
We focus on the 2D case in the $yz$-plane to create significant interaction forces between the robots.
The first stage of \cref{alg:planning} uses sampling-based motion planning to identify the best homeomorphism class where the small multirotors fly on top of the large multirotor (the interaction forces would require more total energy the other way around). 
However, the robots do not reach their goal state exactly and motions are jerky (\cref{fig:plot2}, left).
The second stage uses SCP to refine the motion plan such that robots reach their goal and minimize the total control effort (\cref{fig:plot2}, middle).
The planned trajectory can be tracked without significant error and the interaction forces are very small for the two small quadrotors and within the chosen bound of $\SI{10}{g}$ for the large quadrotor.
We compare this solution to one where we do not consider the interaction forces between robots by setting $f_{a,z}=0$ in \cref{alg:planning}.
The planned trajectories tend to be shorter (\cref{fig:plot2}, right, dashed lines) in that case.
However, when tracking those trajectories, significant tracking errors occur and the interaction forces are outside their chosen bounds of $\SI{5}{g}$ for the small multirotors.

\begin{figure}[t]
\includegraphics[width=\linewidth]{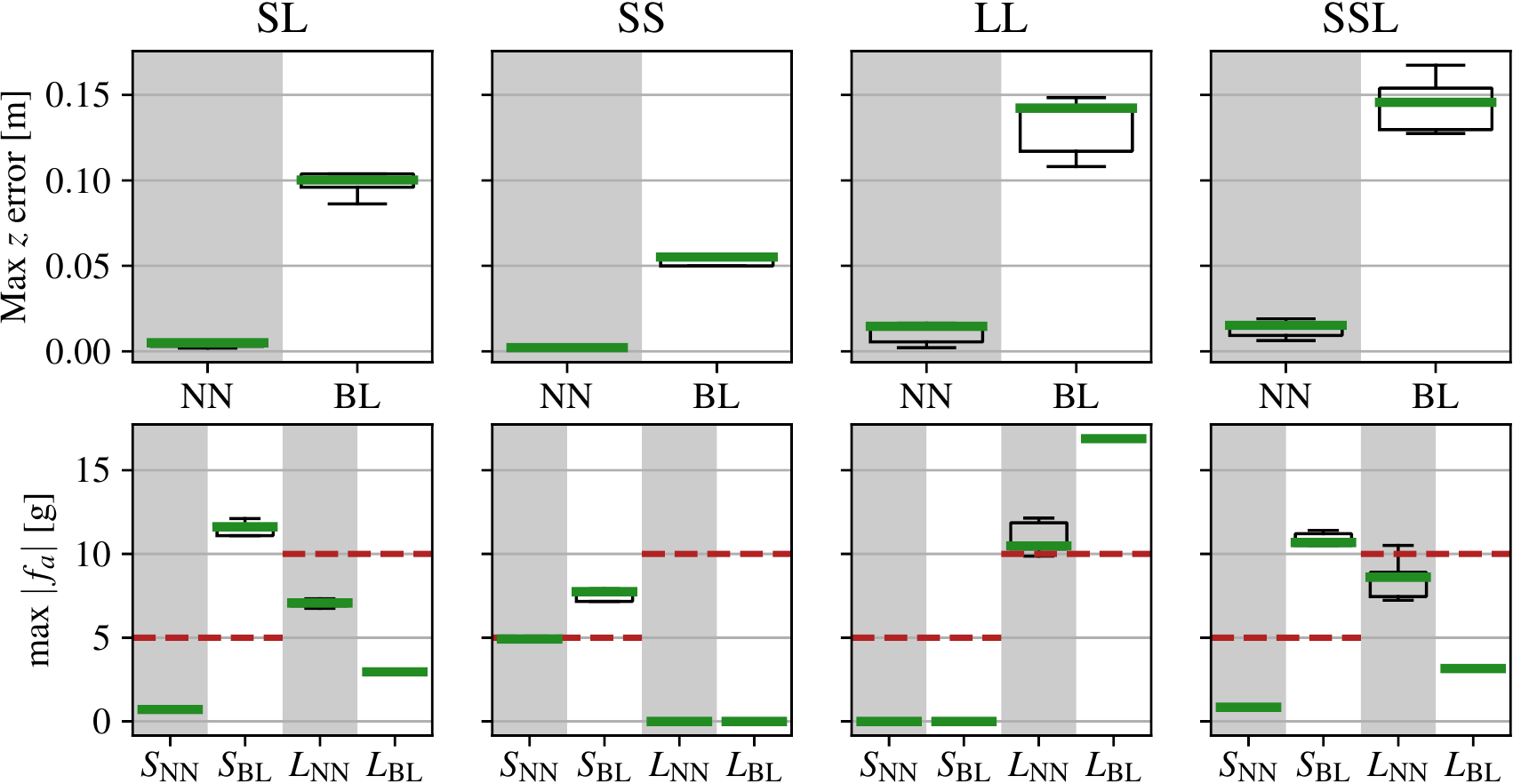}
\caption{Motion planning results for different scenarios (e.g., SSL refers to two small robots and one large robot) comparing planning without neural network (BL) and planning with neural network (NN) over 5 trials.
Top: Worst-case tracking error. Ignoring the interaction force can result in errors of over \SI{10}{cm}.
Bottom: Worst-case interaction force for small and large quadrotors. The baseline has significant violations of the interaction force bounds, e.g., the SL case might create interaction forces greater than $\SI{10}{g}$ for the small quadrotor.}
\label{fig:plot3}
\end{figure}

We empirically evaluated the effect of planning with and without considering interaction forces in several scenarios, see \cref{fig:plot3}. We found that ignoring the interaction forces results in significant tracking errors in all cases (top row).
While this tracking error could be reduced when using our interaction-aware control law, the interaction forces are in some cases significantly over their desired limit.
For example, in the small/large, small/small/large, and large/large cases, the worst-case interaction forces were consistently nearly double the limit (red line, bottom row).
In practice, such large disturbances \changed{can} cause instabilities or even a total loss of control, justifying the use of an interaction-aware motion planner.

\subsection{Control Performance in Flight Tests}

We study the flight performance improvements on swapping tasks with varying number of quadrotors.
For each case, robots are initially arranged in a circle when viewed from above but at different $z$-planes and are tasked with moving linearly to the opposite side of the circle in their plane.
During the swap, all vehicles align vertically at one point in time with vertical distances of \SIrange{0.2}{0.3}{m} between neighbors.
The tasks are similar, but not identical to the randomized swapping tasks used in \cref{exp:dataCollection} because different parameters (locations, transition times) are used.

\begin{figure}[t]
\includegraphics[width=\linewidth]{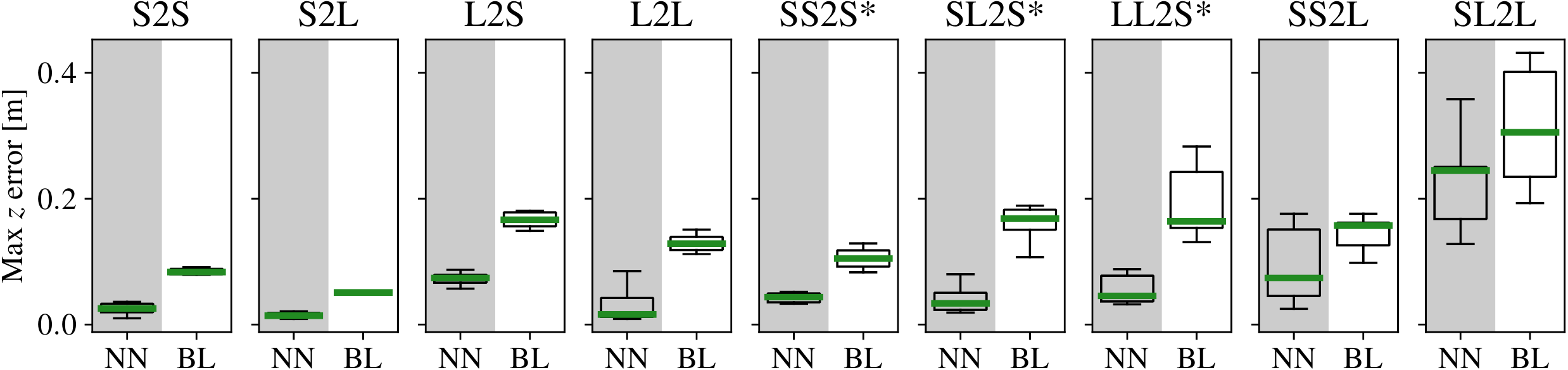}
\caption{Flight test results comparing our solution with learned interaction compensation (NN) with the baseline (BL) in different scenarios. For each case, robots are initially arranged in a circle when viewed from above but at different $z$-planes and are tasked with moving linearly to the opposite side of the circle in their plane. For each swap, we compute the worst-case $z$-error of the lowest quadrotor and plot the data over six swaps.}
\label{fig:plot4}
\end{figure}

Our results are summarized in \cref{fig:plot4} for various combinations of two and three multirotors, \changed{where we use ``XY2Z'' to denote the swap task with robots of type X and Y at the top and a robot of type Z at the bottom.} We compute a box plot with median (green line) and first/third quartile (box) of the maximum $z$-error (repeated over 6 swaps). In some cases, the downwash force was so large that we upgraded the motors of the small quadrotor to improve the best-case thrust-to-weight ratio to 2.6. Such modified quadrotors are indicated as ``S*''.
We also verified that the $x$- and $y$-error distributions are similar across the different controllers and \changed{omit} those numbers for brevity.

Our controller improves the \changed{median} $z$-error in all cases and in most cases this improvement is statistically significant. For example, in the ``L2S'' case, where a large multirotor is on top of a small multirotor for a short period of time, the median $z$-error is reduced from \SI{17}{cm} to \SI{7}{cm}.

\begin{figure}[t]
\includegraphics[width=\linewidth]{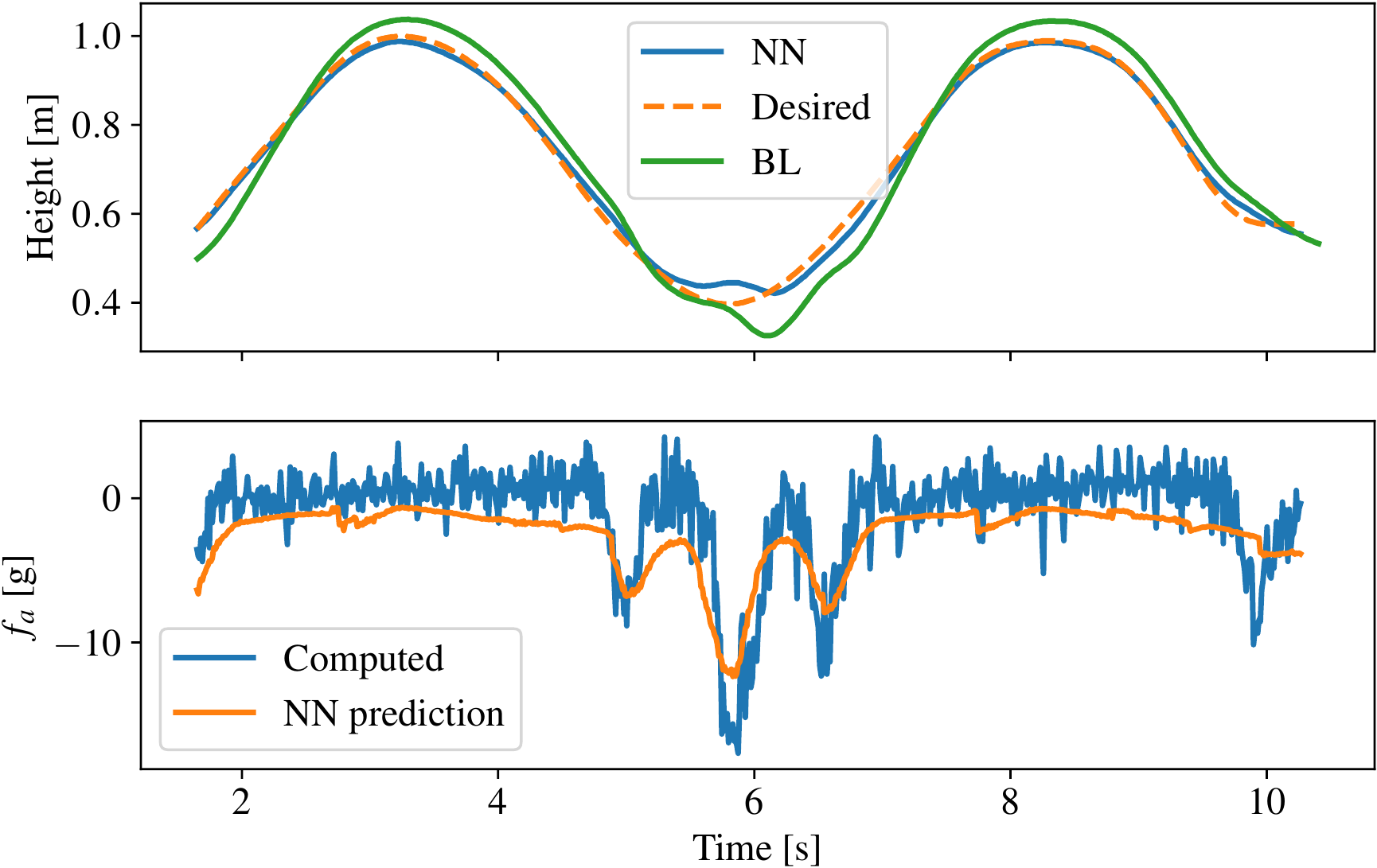}
\caption{Generalization to a team of five multirotors. Three small multirotor move in a vertical ring and two large multirotor move in a horizontal ring. The maximum $z$-error of a small multirotor in the vertical ring with powerful motors is reduced from \SI{10}{cm} to \SI{5}{cm} and $f_a$ is predicted accurately.}
\label{fig:plot5}
\end{figure}

To estimate the limits of our learning generalization, we test our approach on larger teams.
First, we consider a team of five robots, where two large robots move on a circle in the horizontal plane and three small robots move on  circle in the vertical plane such that the two circles form intertwined rings. In this case, the $f_{a,z}$ prediction is accurate and the maximum $z$-error can be reduced significantly using our neural network prediction, see \cref{fig:plot5} for an example.
Second, we consider a team of 16 robots moving on three intertwined rings as shown in \cref{fig:fig1}(b,c). Here, two large and four small robots move on an ellipsoid in the horizontal plane, and five robots move on circles in different vertical planes. In this case, robots can have significantly more neighbors (up to 15) compared to the training data (up to 2), making the prediction of $f_{a,z}$ relatively less accurate. However, the maximum $z$-error of a small multirotor in one of the vertical rings with powerful motors is still reduced from \SI{15}{cm} to \SI{10}{cm}.

\changed{We note that a conceptually-simpler method is to estimate and compensate for $f_{a,z}$ online without learning. However, online estimation will not only introduce significant delays, but also be very noisy especially in close-proximity flight. Our learning-based method has no delay (because it directly \emph{predicts} $f_{a,z}$ at the current time step), and considerably mitigates the noise due to the use of spectral normalization and delay-free filtering in the training process. In experiments, we observe that the online estimation and compensate method would quickly crash the drone.}

\section{Conclusion}
In this paper, we present \ns, a learning-based approach that enables close-proximity flight of heterogeneous multirotor teams.
Compared to previous work, robots can fly much closer to each other safely, because we accurately predict the interaction forces caused by previously unmodeled aerodynamic vehicle interactions.
To this end, we introduce \emph{heterogeneous deep sets} as an efficient and effective deep neural network architecture that only relies on relative positions and velocities of neighboring vehicles to learn the interaction forces between multiple quadrotors.
Our architecture also allows to model the ground effect and other unmodeled dynamics by viewing the physical environment as a special neighboring robot.
To our knowledge, our approach provides the first model of interaction forces between two or more multirotors.

We demonstrate that the learned interactions are crucial in two applications of close-proximity flight.
First, they can be used in multi-robot motion planning to compute trajectories that have bounded disturbances caused by neighboring robots and that consider platform limitations such as maximum thrust capabilities directly.
The resulting trajectories enable a higher robot density compared to existing work that relies on conservative collision shapes.
Second, we can compute the interaction forces in real-time on a small 32-bit microcontroller and apply them as additional feed-forward term in a novel delay-compensated nonlinear stable tracking controller.
Such an approach enables to reduce the tracking error significantly, if the maximum thrust capabilities of the robots are sufficient.

We validate our approach on different tasks using two to sixteen quadrotors of two different sizes and demonstrate that our training method generalizes well to a varying number of neighbors, is computationally efficient, and reduces the worst-case height error by a factor of two or better.

% if have a single appendix:
%\appendix[Proof of the Zonklar Equations]
% or
%\appendix  % for no appendix heading
% do not use \section anymore after \appendix, only \section*
% is possibly needed

% use appendices with more than one appendix
% then use \section to start each appendix
% you must declare a \section before using any
% \subsection or using \label (\appendices by itself
% starts a section numbered zero.)
%

% \appendices
% \section{Proof of the First Zonklar Equation}
% Appendix one text goes here.

% % you can choose not to have a title for an appendix
% % if you want by leaving the argument blank
% \section{}
% Appendix two text goes here.

% % use section* for acknowledgment
% \section*{Acknowledgment}

% The authors would like to thank...

% Can use something like this to put references on a page
% by themselves when using endfloat and the captionsoff option.
\ifCLASSOPTIONcaptionsoff
  \newpage
\fi

% trigger a \newpage just before the given reference
% number - used to balance the columns on the last page
% adjust value as needed - may need to be readjusted if
% the document is modified later
%\IEEEtriggeratref{8}
% The "triggered" command can be changed if desired:
%\IEEEtriggercmd{\enlargethispage{-5in}}

% references section

\printbibliography

% biography section
% 
% If you have an EPS/PDF photo (graphicx package needed) extra braces are
% needed around the contents of the optional argument to biography to prevent
% the LaTeX parser from getting confused when it sees the complicated
% \includegraphics command within an optional argument. (You could create
% your own custom macro containing the \includegraphics command to make things
% simpler here.)
%\begin{IEEEbiography}[{\includegraphics[width=1in,height=1.25in,clip,keepaspectratio]{mshell}}]{Michael Shell}
% or if you just want to reserve a space for a photo:
\begin{IEEEbiography}[{\includegraphics[width=1in,height=1.25in,clip,keepaspectratio]{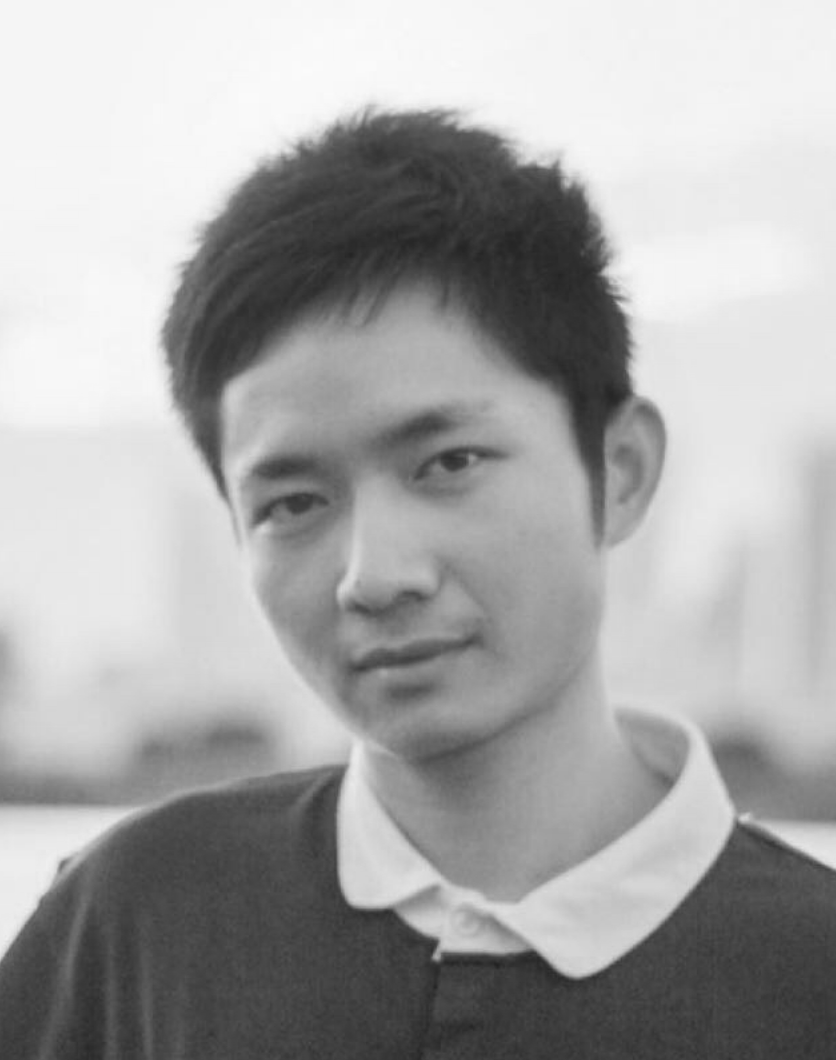}}]{Guanya Shi}
(S'18) is a Ph.D. student at the department of computing and mathematical sciences at the California Institute of Technology, USA. He holds a diploma in mechanical engineering (\textit{summa cum laude}) from Tsinghua University, China (2017). His research focuses on the intersection of machine learning and control theory with the applications in real-world complex systems such as robotics. He was the recipient of several awards, including the Simoudis Discovery Prize and the Qualcomm scholarship.
\end{IEEEbiography}

\begin{IEEEbiography}[{\includegraphics[width=1in,height=1.25in,clip,keepaspectratio]{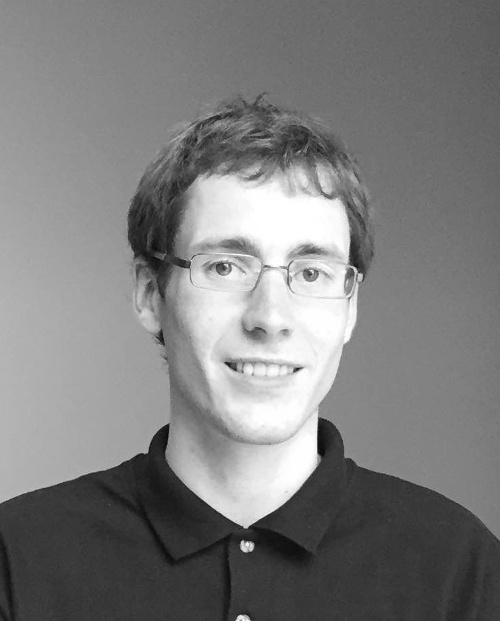}}]{Wolfgang Hönig}
(S'15--M'20) is an junior research group leader at TU Berlin, Germany. Previously, he was a postdoctoral scholar at the California Institute of Technology, USA. He received the diploma in Computer Science from TU Dresden, Germany in 2012, and the M.S. and Ph.D. degrees from the University of Southern California (USC), USA in 2016 and 2019, respectively.
His research focuses on enabling large teams of physical robots to collaboratively solve real-world tasks, using tools from informed search, optimization, and machine learning.
Dr. Hönig has been the recipient of several awards, including Best Paper in Robotics Track for a paper at ICAPS 2016 and the 2019 Best Dissertation Award in Computer Science at USC.
\end{IEEEbiography}

% insert where needed to balance the two columns on the last page with
% biographies
%\newpage

\begin{IEEEbiography}[{\includegraphics[width=1in,height=1.25in,clip,keepaspectratio]{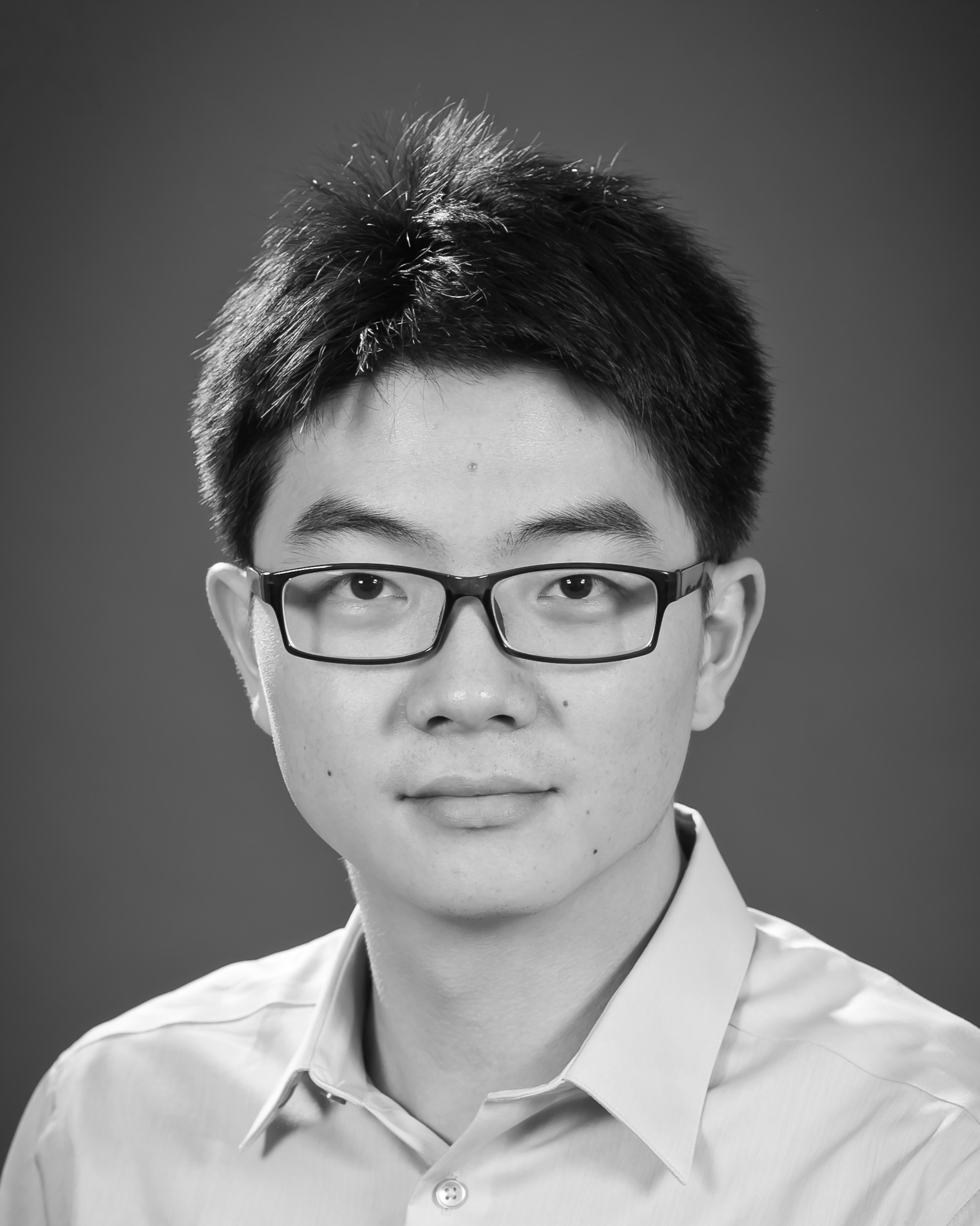}}]{Xichen Shi}(S'13--M'20) is a software engineer at Waymo, an autonomous driving vehicle company. He received a Ph.D. in Space Engineering from California Institute of Technology, USA in 2021, and a B.S in Aerospace Engineering (Highest Honors) from University of Illinois at Urbana-Champaign, USA in 2013. His research focuses on intelligent control systems for fixed-wing and multirotor aerial robots.
\end{IEEEbiography}

\begin{IEEEbiography}[{\includegraphics[width=1in,height=1.25in,clip,keepaspectratio]{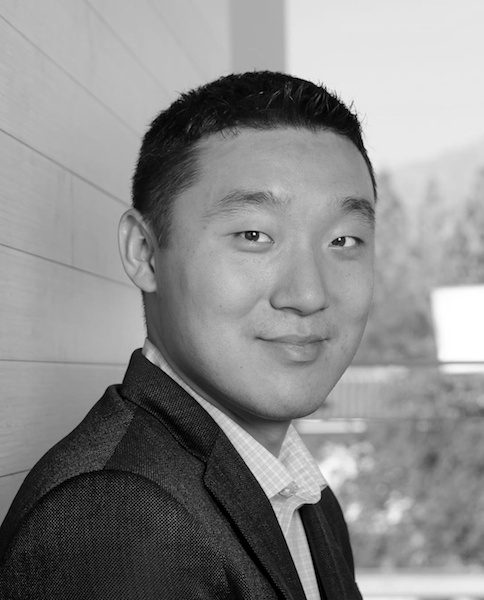}}]{Yisong Yue} is a professor of Computing and Mathematical Sciences at the California Institute of Technology. He was previously a research scientist at Disney Research and a postdoctoral researcher at Carnegie Mellon University. He received a Ph.D. from Cornell University and a B.S. from the University of Illinois at Urbana-Champaign.
Dr. Yue's research interests are centered around machine learning and has been applied to information retrieval, data-driven animation, behavior analysis, protein engineering, and learning-accelerated optimization.
Dr. Yue was the recipient of several awards and honors, including the Best Paper Award at ICRA 2020, the Best Student Paper Award at CVPR 2021, the Best Paper Nomination at WSDM 2011, ICDM 2014, SSAC 2017 and RA-L, Best Reviewer at ICLR 2018, and the Okawa Foundation Grant Recipient, 2018.
\end{IEEEbiography}

\begin{IEEEbiography}[{\includegraphics[width=1in,height=1.25in,clip,keepaspectratio]{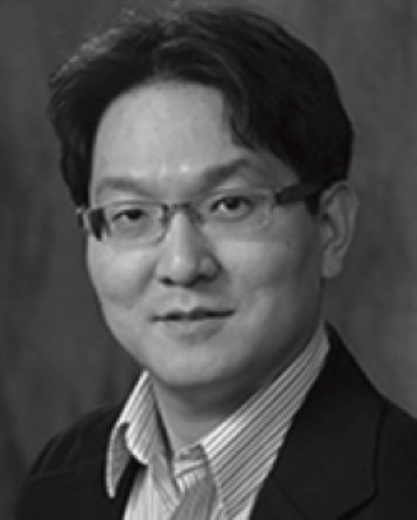}}]{Soon-Jo Chung} (M'06--SM'12) is the Bren Professor of Aerospace and Control and Dynamical Systems and a Jet Propulsion Laboratory Research Scientist in the California Institute of Technology, USA. He was with the faculty of the University of Illinois at Urbana-Champaign during 2009–2016.
He received the B.S. degree (\textit{summa cum laude}) in aerospace engineering from the Korea Advanced Institute of Science and Technology, South Korea, in 1998, and the S.M. degree in aeronautics and astronautics and the Sc.D. degree in estimation and control from Massachusetts Institute of Technology, USA, in 2002 and 2007, respectively.
His research interests include spacecraft and aerial swarms and autonomous aerospace systems, and in particular, on the theory and application of complex nonlinear dynamics, control, estimation, guidance, and navigation of autonomous space and air vehicles.
Dr. Chung was the recipient of the UIUC Engineering Deans Award for Excellence in Research, the Beckman Faculty Fellowship of the UIUC Center for Advanced Study, the U.S. Air Force Office of Scientific Research Young Investigator Award, the National Science Foundation Faculty Early Career Development Award, a 2020 Honorable Mention for the IEEE RA-L Best Paper Award, and three Best Conference Paper Awards from the IEEE and AIAA. He is an Associate Editor of IEEE T-AC and AIAA JGCD. 
He was an Associate Editor of IEEE T-RO, and the Guest Editor of a Special Section on Aerial Swarm Robotics published in the IEEE T-RO. 
He is an Associate Fellow of AIAA.
\end{IEEEbiography}

% You can push biographies down or up by placing
% a \vfill before or after them. The appropriate
% use of \vfill depends on what kind of text is
% on the last page and whether or not the columns
% are being equalized.

%\vfill

% Can be used to pull up biographies so that the bottom of the last one
% is flush with the other column.
%\enlargethispage{-5in}

\end{document}